\newcommand{\eat}[1]{}
\newcommand{\smalltitle}[1]{ \vspace{1mm}{\noindent\textbf{#1.}\hspace{1mm}}}
\newcommand{\cmark}{\ding{51}}%
\newcommand{\xmark}{\ding{55}}%
\newtheorem{proposition}{Proposition}
\def\D{\mathcal{D}}
\def\L{\mathcal{L}}
\def\R{\mathcal{R}}
\def\naive{na\"ive }
\DeclareMathOperator*{\argmin}{arg\,min}
\DeclarePairedDelimiter\abs{\lvert}{\rvert}%
\title{Better Generalized Few-Shot Learning Even Without Base Data}
\author{
    Seong-Woong Kim, Dong-Wan Choi\thanks{Corresponding Author}
}
\begin{document}

\maketitle

\begin{abstract}
This paper introduces and studies \textit{zero-base generalized few-shot learning} (\textit{zero-base GFSL}), which is an extreme yet practical version of few-shot learning problem. Motivated by the cases where base data is not available due to privacy or ethical issues, the goal of zero-base GFSL is to newly incorporate the knowledge of few samples of novel classes into a pretrained model without any samples of base classes. According to our analysis, we discover the fact that both mean and variance of the weight distribution of novel classes are not properly established, compared to those of base classes. The existing GFSL methods attempt to make the weight norms balanced, which we find helps only the variance part, but discard the importance of mean of weights particularly for novel classes, leading to the limited performance in the GFSL problem even with base data. In this paper, we overcome this limitation by proposing a simple yet effective normalization method that can effectively control both mean and variance of the weight distribution of novel classes without using any base samples and thereby achieve a satisfactory performance on both novel and base classes. Our experimental results somewhat surprisingly show that the proposed zero-base GFSL method that does not utilize any base samples even outperforms the existing GFSL methods that make the best use of base data. Our implementation is available at: \url{https://github.com/bigdata-inha/Zero-Base-GFSL}.
\end{abstract}


\section{Introduction} \label{sec:intro}

Few-shot learning (FSL) \cite{Fei-FeiFP06,LakeSGT11,lake15} is now a major problem motivated by the practical difficulty of data collection. Trying to mimic the human's ability to acquire general knowledge with a few observations, the goal of FSL is to learn new knowledge by training only a few samples on a model, which is often a pretrained model. Many existing works focus on a scenario where the resulting model only discriminates novel classes, referred to as \textit{standard few-shot learning} \cite{FinnAL17,koch15, SnellSZ17,VinyalsBLKW16}. It is more challenging yet more practical when the model can infer both existing classes (a.k.a. base classes) already trained in the model and unseen classes (a.k.a. novel classes) newly learned with a few samples. Only some recent works \cite{GidarisK18,GidarisK19,KuklevaKS21,ShiSSW20} tackle this version of FSL, namely \textit{generalized few-shot learning} (\textit{GFSL}), which not only aims to learn about novel classes but also preserve the existing knowledge for base classes.

Due to its simplicity and performance, transfer learning becomes a dominant approach in GFSL, where we fine-tune a model already trained over base classes to additionally learn the knowledge of new classes with a few samples. Despite the small volume of new data, the fine-tuned model gets easily biased to novel classes to the point that the model becomes pretty useless for base classes. Hence, most existing works in GFSL focus on how to make the resulting model well balanced over base and novel classes by performing balanced fine-tuning \cite{KuklevaKS21,LiLX0019,QiBL18} or leveraging additional architectures \cite{GidarisK18,RenLFZ19,YoonKSM20} and supplement information \cite{Li0LFLW20,ShiSSW20}.


\begin{figure}[t]
    \centering
    \includegraphics[height=5.8mm]{./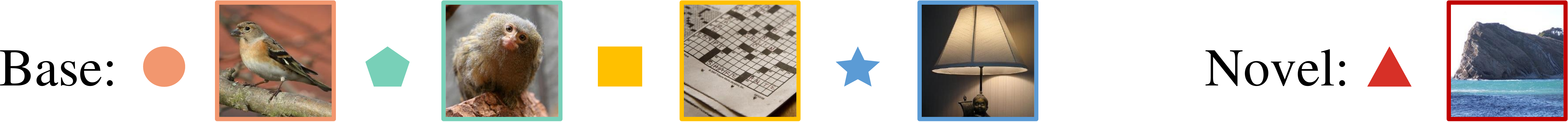} \\
    \subfigure[\label{fig:decision:a}Pretrained]{\hspace{1mm}\includegraphics[width=0.3\columnwidth]{./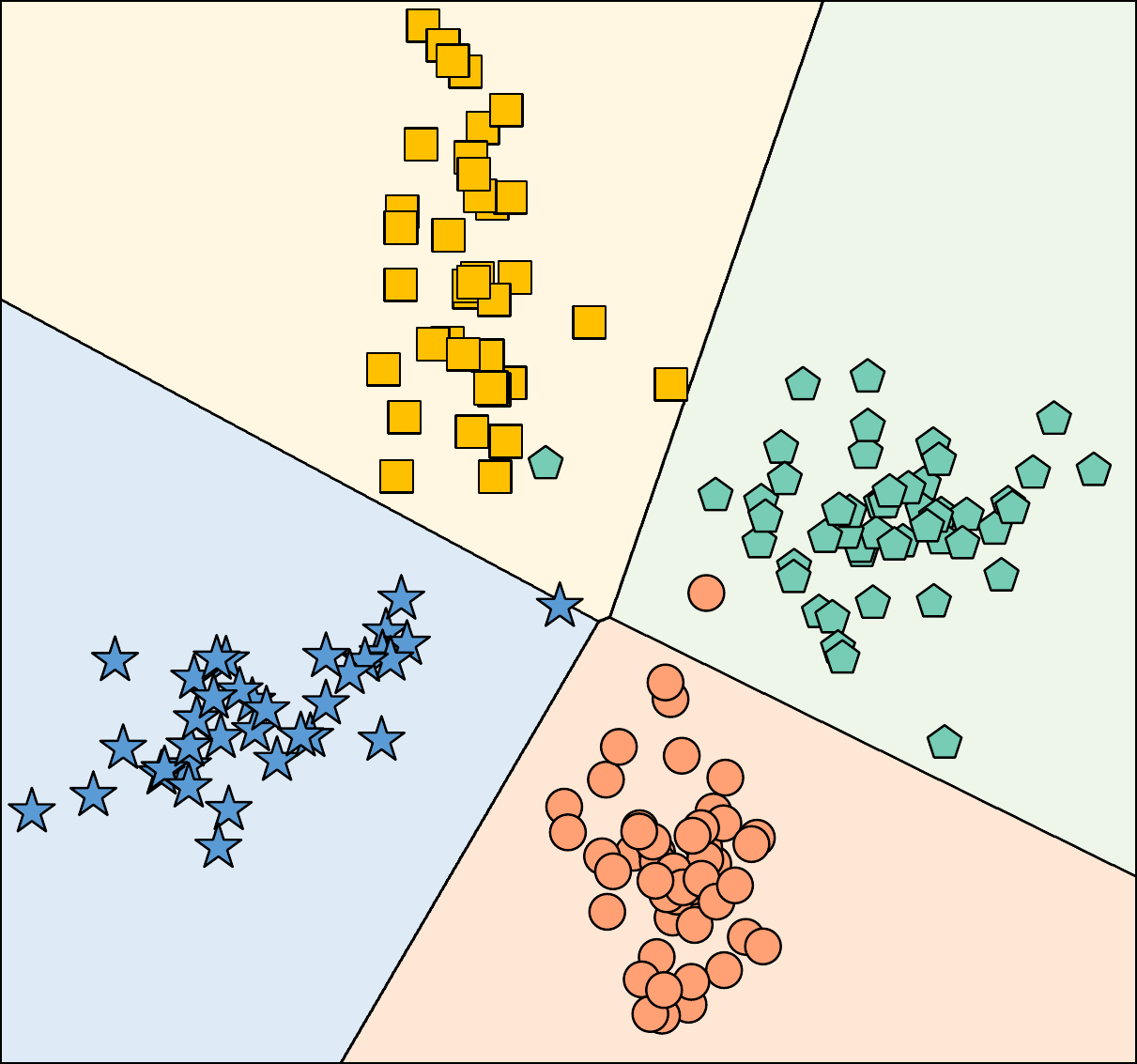}\hspace{1mm}}
    \subfigure[\label{fig:decision:b}Biased]{\hspace{1mm}\includegraphics[width=0.3\columnwidth]{./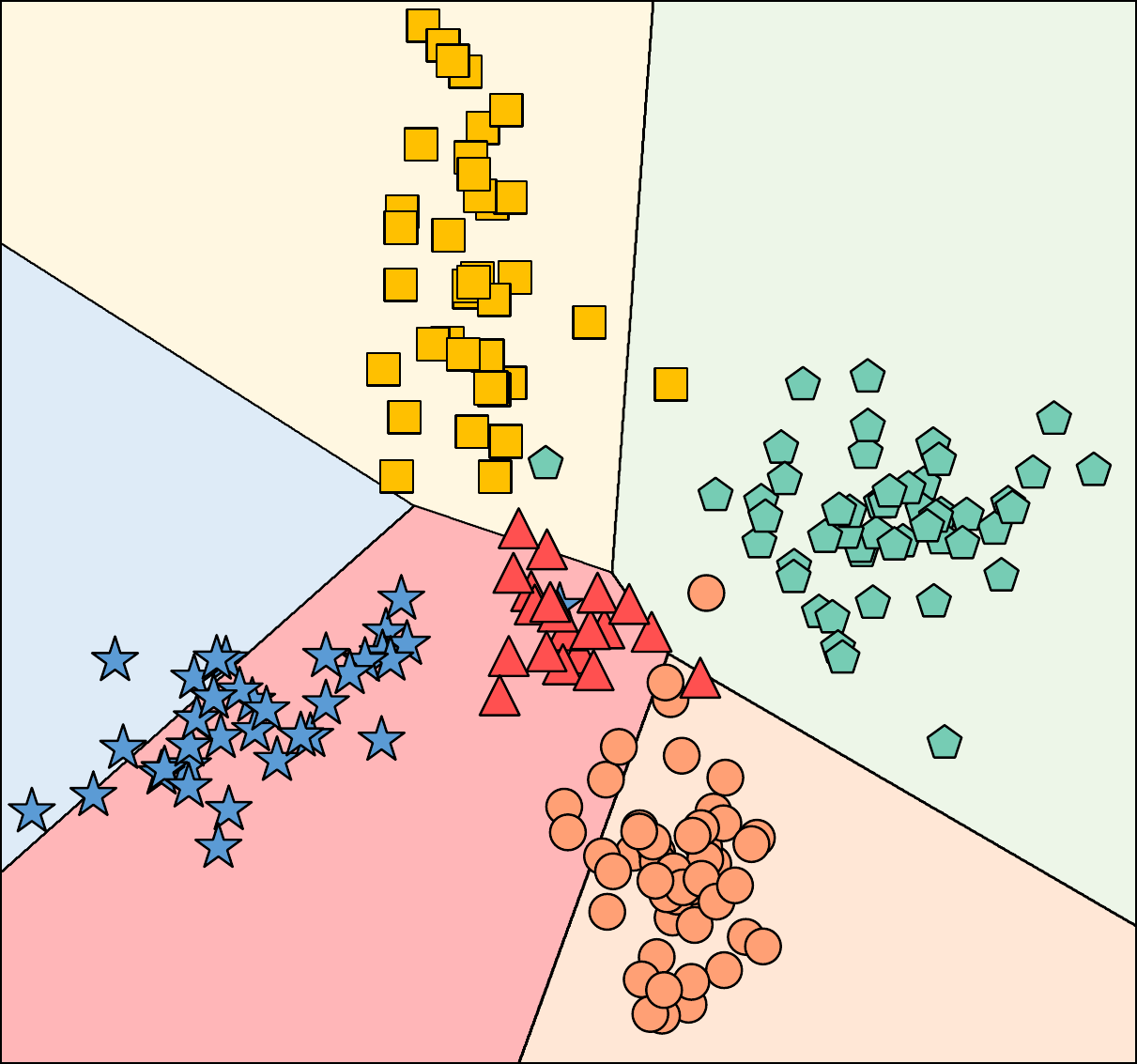}\hspace{1mm}}
    \subfigure[\label{fig:decision:c}Desired (ours)]{\hspace{1mm}\includegraphics[width=0.3\columnwidth]{./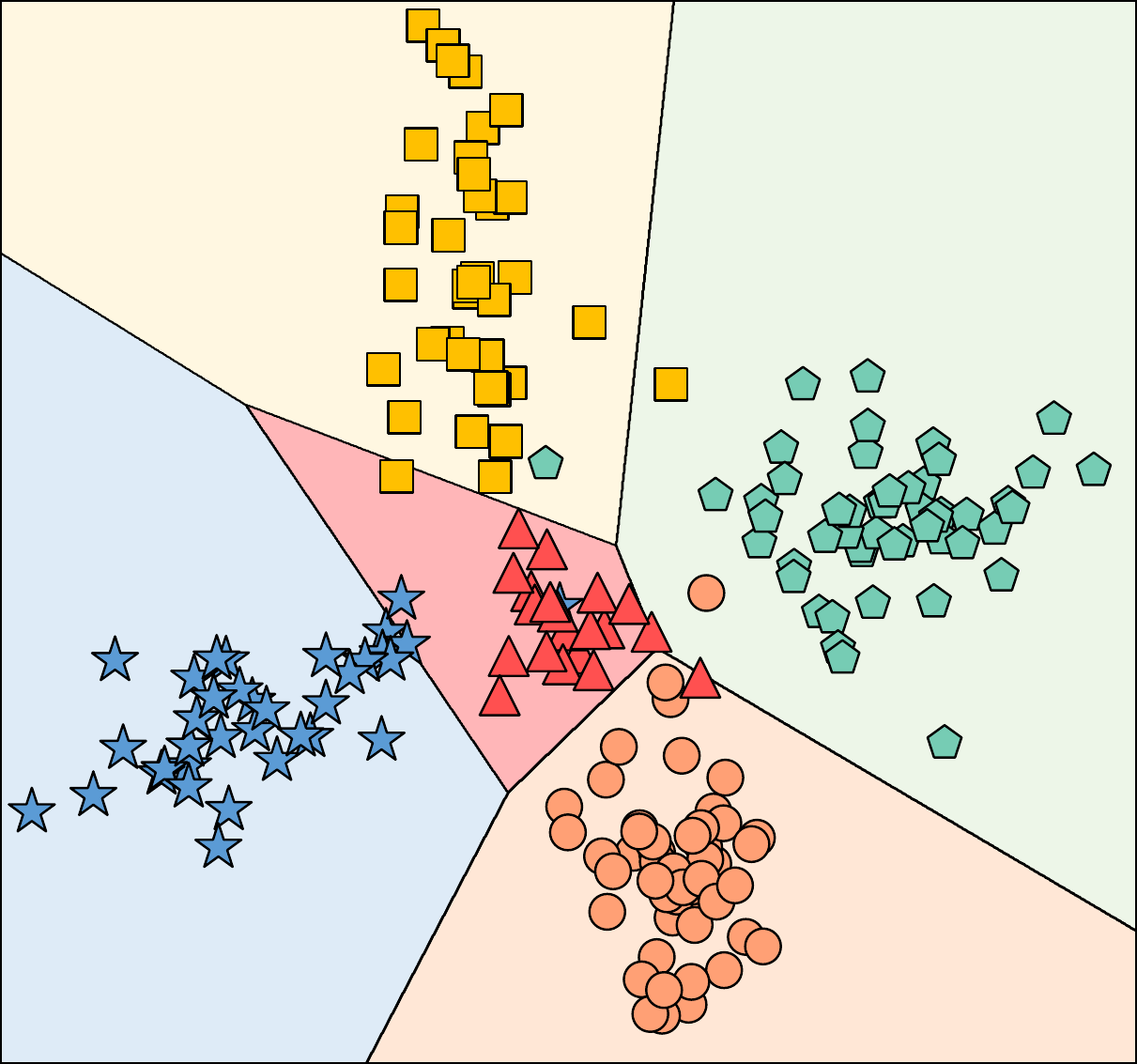}\hspace{1mm}}
    \caption{Visualization of the decision boundaries of ResNet-18 trained with an auxiliary 2D projection module before the classifier layer on \textit{tiered-ImageNet}, where red triangles are feature vectors of novel classes and the others are those of base classes. Note that feature vectors themselves remain the same in the frozen feature space. \textit{(a)}: Well-formed decision boundaries of the pretrained model of base classes. \textit{(b)}: Fine-tuned decision boundaries biased toward novel classes. \textit{(c)}: Well-balanced decision boundaries for both base and novel classes via our normalization.}
	\label{fig:decision}
	\vspace{-3mm}
\end{figure}

All the aforementioned approaches assume the presence of \textit{base data} (i.e., samples of base classes) for preserving the knowledge of base classes as well as learning the relationship between novel and base classes. In reality, however, base data may not be always available due to some privacy or ethical issues. For instance, \textit{Google} releases the highly generic model \textit{BiT} \cite{KolesnikovBZPYG20} of 18,000 classes, but its training data \textit{JFT} \cite{sun17} consists of millions of private images that should not be exposed to public. Furthermore, retraining the base data has never been a perfect solution for GFSL either. In spite of retraining overhead, the overall performance can be highly dependent on which base samples are selected. To be shown by our experimental results, the state-of-the-art GFSL methods using base data turn out to be even less accurate than our GFSL method without employing any base samples.

Beyond the limitation of the existing GFSL approaches, this paper focuses on an even more challenging scenario of few-shot learning, called \textit{zero-base GFSL}, where we are free to use a pretrained model somehow learned over base classes but cannot retrain any of base samples during GFSL. Obviously, fine-tuning a pretrained model with only novel samples leads to the model highly biased toward novel classes even if we freeze the feature extractor of the model. Thus, even in the frozen feature space built on base classes, novel samples are capable of forming undesirably large decision boundaries, which is hard to be addressed without jointly training all of the base and novel samples. Figure \ref{fig:decision:b} shows such an example where some base samples, which used to be well discriminated in their pretrained model (Figure \ref{fig:decision:a}), can be misclassified as a novel class with a large decision boundary. This leads to the model pretty inaccurate for base classes without normalization as shown in Figure \ref{fig:imagenet}.

Then, how could only a few novel samples make the decision boundaries highly biased toward novel classes in the feature space? To answer this question, we investigate the distribution of weights of classifiers that actually determine decision boundaries, and discover the following undesirable facts in terms of its mean and variance. First, there is an imbalance between the variances of weight distributions of base and novel classes, which we also find is related to the norm imbalance tackled by many existing GFSL methods \cite{FanMLS21,GidarisK18,QiBL18,WangH0DY20}. However, these methods do not consider how the means of weight distributions are different between base and novel classes. This paper newly observes that the average weight of the novel classifier is positively shifted from that of the base classifier, which we call \textit{mean shifting phenomenon}. Although this phenomenon is indeed a more crucial reason behind the biased model, existing works rarely try to fix the shifted mean of novel classes.

To remedy these undesirable mean and variance of novel classes, this paper designs a new normalization method that can achieve a centered mean as well as a balanced variance without retraining any base samples. As experimentally shown in Figure \ref{fig:decision:c}, our normalization method enables the classifier to form well-balanced decision boundaries for both base and novel classes, and thereby it is observed in Figure \ref{fig:imagenet} that the performance degradation of base classes is prevented while keeping the performance of the novel classes as high as possible. With only 5-shot of novel classes, we can keep the performance of both base and novel classes close to their upper bounds that are conditional accuracies given only base and novel classes, respectively. In our experiments, without using any base data, our method even beats the existing state-of-the-art GFSL methods with clear margins, like 4.59\% and 3.53\% 5-shot accuracy in \textit{mini-} and \textit{tiered-ImageNet}, respectively.

\begin{figure}[t!]
	\centering
    \includegraphics[width=0.75\columnwidth]{./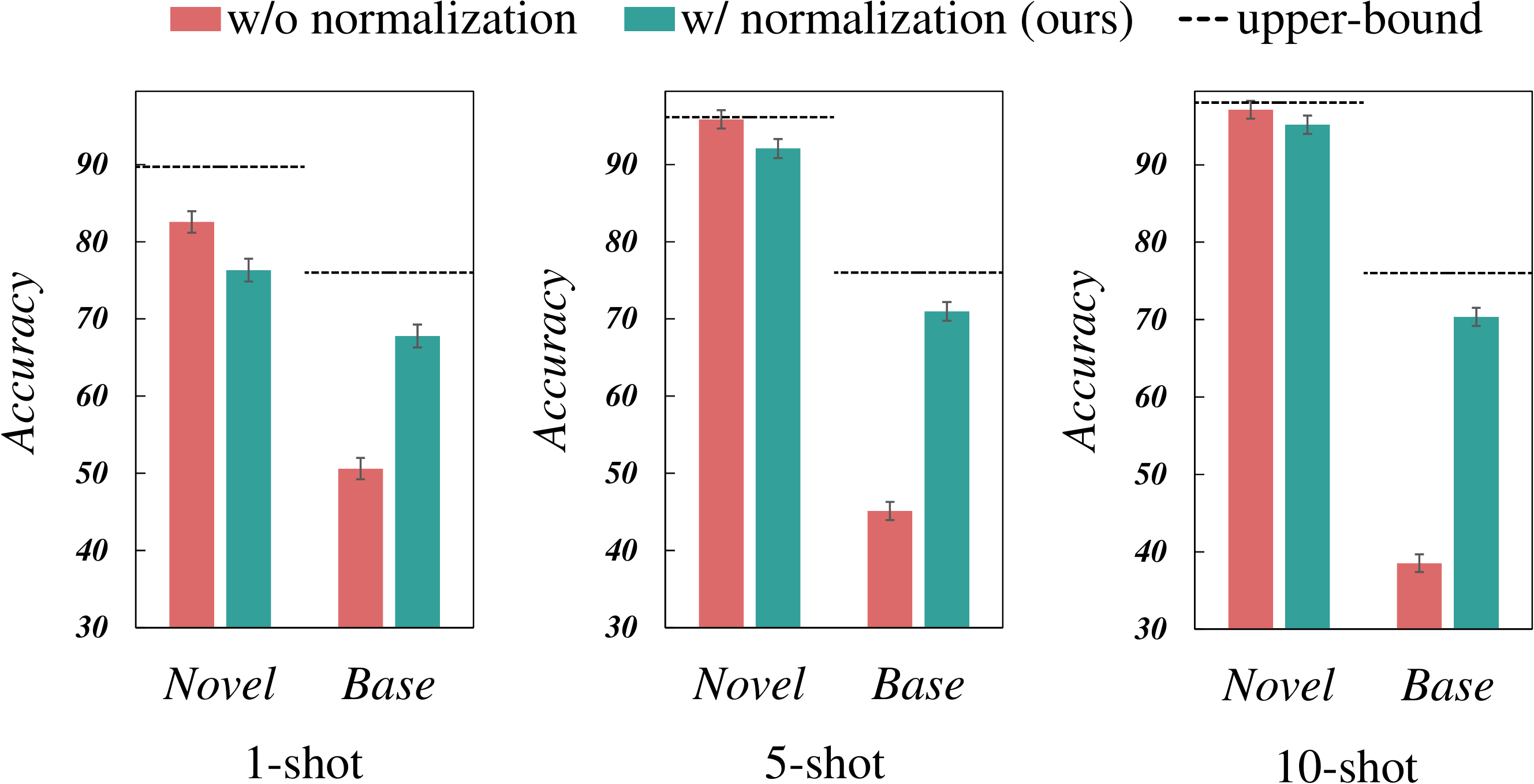}
    \caption{Comparison on the accuracy of zero-base GFSL with ResNet-50 on \textit{ImageNet-800}. The upper bound is the conditional test accuracy given either novel or base classes.}
	\label{fig:imagenet}
\end{figure}

\section{Related Works} \label{sec:related}

\smalltitle{Few-shot learning}
Few-shot learning (FSL) has been studied mostly for the standard scenario aiming to learn novel classes without having to preserve the existing knowledge. The recent methods of standard FSL can be divided into the following two categories: \textit{meta learning} and \textit{transfer learning}. Meta learning methods commonly train a meta-learner in an episodic way, where the base data is divided into multiples tasks each with a few samples and thereby class-agnostic knowledge is acquired from previous experiences \cite{FinnAL17, JamalQ19, LeeMRS19, SunLCS19, SnellSZ17, VinyalsBLKW16, XuXWT21, ZhangCLS20}.



Unlike meta learning, which requires a complex episodic training, transfer learning \cite{ChenLKWH19, DhillonCRS20, TianWKTI20, wang2019} simply re-uses the feature extractor learned from all collected base samples when fine-tuning only the classifier of the model being trained for FSL. These transfer learning approaches often show a better performance than meta learning particularly in deeper neural networks \cite{DhillonCRS20}, and therefore are getting more attention from the state-of-the-art methods. Although transfer learning seems to be quite effective in standard FSL, its performance is still unsatisfactory in generalized few-shot learning (GFSL) without leveraging additional techniques. However, this paper claims that our proposed normalization method can enable simple transfer learning to be highly effective in GFSL even without using any base samples.

\smalltitle{Generalized few-shot learning}
GFSL aims to handle both base and novel classes in a joint space, and hence is regarded to be more difficult than standard FSL. In GFSL, we not only need to learn as much new knowledge as possible but also preserve the pretrained knowledge over base classes. The major approach to this end in GFSL is not to entirely change a pretrained model as we do in transfer learning, but to train some extra architectures, which inform the original output of the pretrained model to be properly modified for a balanced inference between base and novel classes \cite{GidarisK18,RenLFZ19,YoonKSM20} occasionally by leveraging supplement information \cite{Li0LFLW20,ShiSSW20}.


The other approach without introducing any extra architecture is to fine-tune a pretrained model with a balanced dataset of base and novel classes, namely \textit{balanced fine-tuning}. In order to improve the performance, balanced fine-tuning is often performed together with additional techniques like \textit{weighting imprinting} of classifiers \cite{QiBL18} and introducing a three-step framework \cite{KuklevaKS21}. For a better balanced dataset, \textit{hallucination} approaches \cite{HariharanG17,WangGHH18} synthesize novel instances based on the base dataset.

\smalltitle{Incremental few-shot learning}
In continual learning, GFSL is being extended to \textit{incremental few-shot learning} (\textit{IFSL}) \cite{KuklevaKS21,MazumderSR21,TaoHCDWG20,ZhangSLZPX21}, in which the model has to go through a series of tasks of few-shot classes. Similar to the existing GFSL methods, most IFSL methods take the best use of the previously collected dataset to preserve the previous knowledge and learn the relationship among different classes.

All the existing works in GFSL and IFSL somehow need base data for either training extra architectures, balanced fine-tuning, or preserving the previous knowledge. To our best knowledge, this paper is the first study that resolves \textit{zero-base GFSL} via weight normalization without using base data. Although the existing GFSL methods \cite{FanMLS21,GidarisK18,QiBL18,WangH0DY20} also perform basic weight normalization using a cosine classifier, their normalization scheme alone fails to achieve a satisfactory performance without their additional training techniques relying on base data. In this paper, we investigate what occurs to weight distribution during GFSL, and thereby effectively overcome the limitation of the existing basic normalization method.

\section{Preliminary} \label{sec:prob}



\subsection{Basic Framework of GFSL}
In generalized few-shot classification, the training dataset is composed of a base dataset, denoted by ${\D}_{base} =  \{(\mathbf{x_i}, \mathbf{y_i})\}^{K_{base}}_{i=1}$, and a novel dataset, denoted by ${\D}_{novel} =  \{(\mathbf{x_j}, \mathbf{y_j})\}^{K_{novel}}_{j=1}$, where $(\mathbf{x}, \mathbf{y})$ is a training instance and $K_{base} \gg K_{novel}$ is usually assumed. We denote $C_{base}$ and $C_{novel}$ be two disjoint sets of base classes and novel classes, respectively, where $\abs{C_{base}} \gg \abs{ C_{novel}}$. 

Then, the goal of GFSL is to train a model $\Phi = \{\phi, \theta\}$ with $\D_{base} \cup \D_{novel}$ such that $\Phi$ can discriminate any classes in $C_{base} \cup C_{novel}$, which is different from standard FSL aiming to classify only $C_{novel}$. $\phi$ and $\theta$ are parameters of the feature extractor and the \textit{linear} classifier of the model, and we also denote $f_{\phi}(\mathbf{x})$ be the feature vector returned from $\phi$ given $\mathbf{x}$, where the dimensionality of each feature vector is denoted as $d$. We then make inference by taking the softmax of $\theta^\top f_\phi(\mathbf{x})$ for a given input sample $\mathbf{x}$.

The basic framework of GFSL consists of two stages, namely pre-training and fine-tuning. This two-stage transfer learning scheme, which fully utilizes the knowledge of base classes to learn few-shot novel classes, is regarded as one of the leading paradigms for GFSL as well as standard FSL due to its simplicity and effectiveness.

\smalltitle{Pre-training} In the pre-training stage, we train a full model $\Phi = \{\phi, \theta_{base} \}$ only with $\D_{base}$ by:
\begin{equation}\label{eq:pretraining}
     \Phi = \argmin_{\{\phi,\theta_{base}\}} \sum_{(\mathbf{x},\mathbf{y}) \in {\D}_{base}} -\mathbf{y} \log p(\mathbf{x}) + \R (\phi, \theta_{base}),
\end{equation}
where $\theta_{base} \in \mathbb{R}^{d \times \abs{C_{base}}}$ is the base classifier, $p(\mathbf{x})$ is the softmax output probability, i.e., $p_i(\mathbf{x}) = \frac{exp(\mathbf{z}_i)}{\sum_{j=1}^{|C_{base}|}exp(\mathbf{z}_j)}$ such that $\mathbf{z} = \theta^\top f_\phi(\mathbf{x})$, and $\R$ is the regularization term for the full model. This stage is equivalent to typical supervised learning as $\D_{base}$ has a sufficient number of samples.

\smalltitle{Fine-tuning}
Once a model is well-trained over base classes, we then fine-tune the model with respect to $C_{base} \cup C_{novel}$. In order to incorporate novel classes, the model should be extended to $\Phi = \{\phi, \theta \}$ in this fine-tuning stage, where $\theta=\{\theta_{base},\theta_{novel}\}$, $\theta_{base}$ is the trained base classifier and $\theta_{novel}$ is the novel classifier randomly initialized. Also, in order to make all the classes well balanced in the fine-tuned model, existing GFSL methods \cite{KuklevaKS21,QiBL18} train a balanced dataset, which is either a subset or superset of $\D_{base} \cup \D_{novel}$, constructed by undersampling $\D_{base}$ or oversampling $\D_{novel}$. Given such a balanced dataset, most state-of-the-art methods \cite{RenLFZ19,YoonKSM20} fine-tune only the classifier while freezing the feature extractor. This is due to the fact that the feature extractor is already well-trained on a sufficient number of samples and hence better to be frozen not to hurt generalization of the model.

As long as a balanced dataset is ideally constructed, the model can well be trained for both base and novel classes by this basic framework as fine-tuning can properly rearrange all decision boundaries in the joint feature space. Unfortunately, vanilla sampling strategies \cite{HuangLLT16, WangRH17} would not work well in GFSL as undersampling inevitably causes information loss and oversampling inherently suffers from the lack of diversity due to many redundant samples. This is why many existing works propose extra techniques to achieve a better performance, which is not quite satisfactory according to our experiments. More importantly, a balanced dataset can be collected and fine-tuned only if the base dataset is available, which may not always be the case.

\subsection{Problem Statement of Zero-Base GFSL} 

In the proposed zero-base GFSL problem, our objective is to train a joint linear classifier $\theta$, which works well for both $C_{base}$ and $C_{novel}$, with only a few samples of novel classes (i.e., $\D_{novel}$) without using any samples of $\D_{base}$. Following the basic GFSL framework, we only fine-tune the classifier while freezing the feature extractor as:
\begin{equation}\label{eq:finetuning}
   \theta = \argmin_\theta \sum_{(\mathbf{x},\mathbf{y}) \in {\D}_{novel}} -\mathbf{y} \log p(\mathbf{x})+ \mathcal{R} (\theta).
\end{equation}
Note that it is even more desired to freeze the feature extractor in zero-base GFSL than in GFSL because the feature extractor can severely be biased toward novel classes after fine-tuning with only novel samples.





\begin{figure}[t]
	\centering
	\includegraphics[height=1.8mm]{./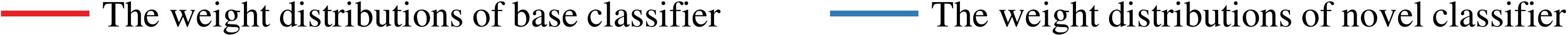}
    \subfigure[\label{fig:dist:a}1-shot]{\hspace{1mm}\includegraphics[width=0.4
    \columnwidth]{./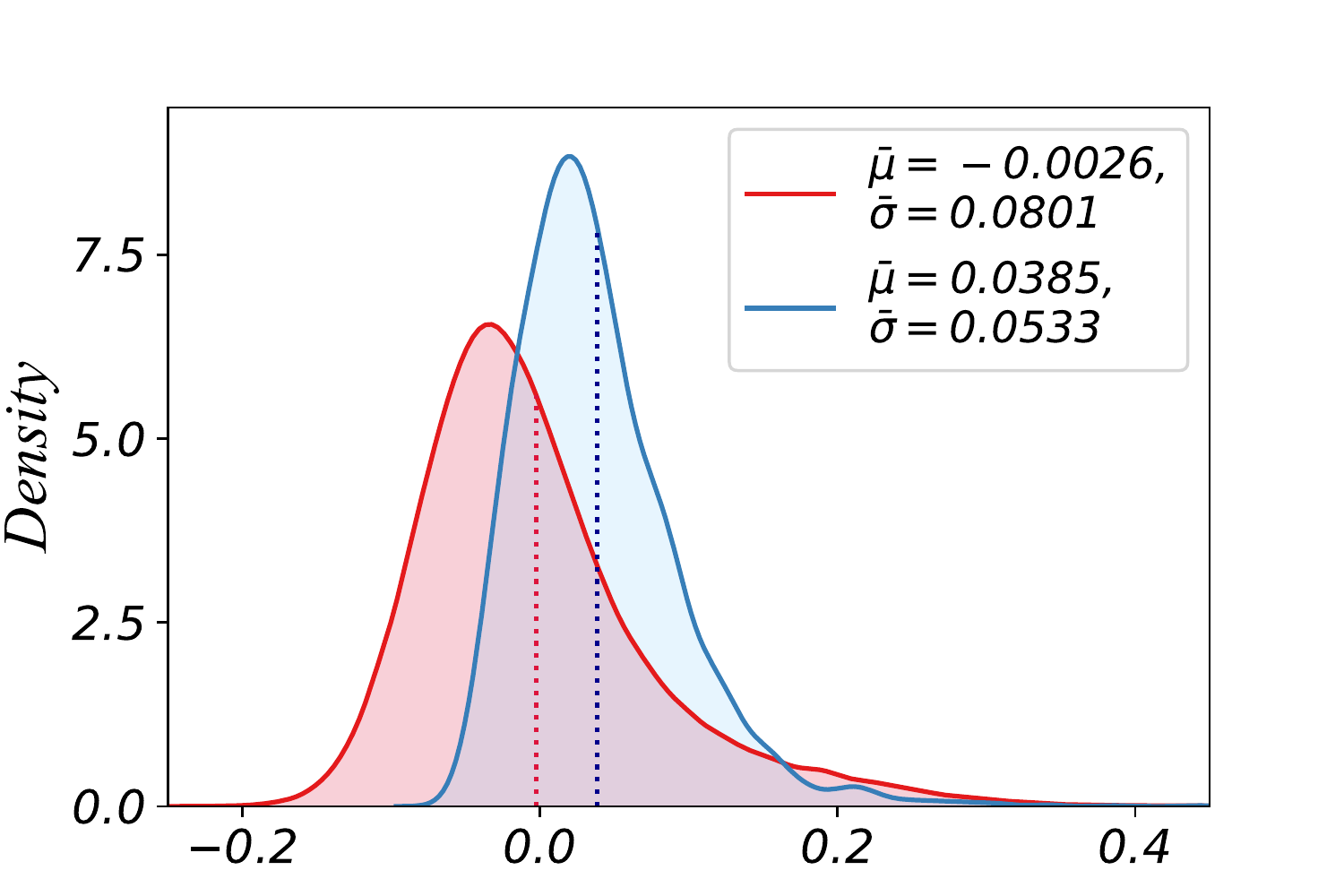}\hspace{1mm}} 
    \subfigure[\label{fig:dist:b}5-shot]{\hspace{1mm}\includegraphics[width=0.4\columnwidth]{./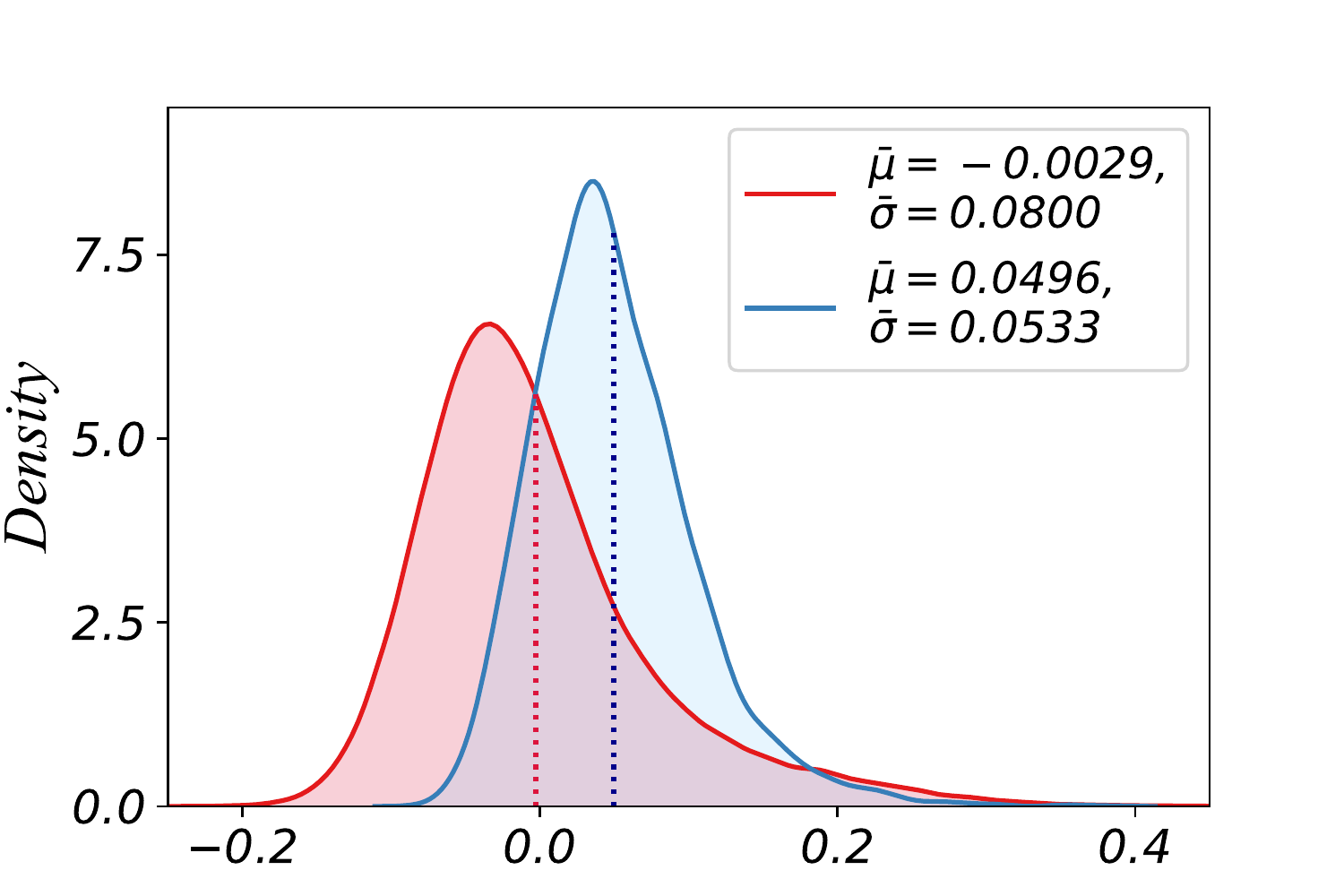}\hspace{1mm}}
    \caption{Weight distributions of base and novel classifiers where dotted lines represent means, using \textit{tiered-ImageNet} with ResNet-18.}
	\label{fig:shifting}
	\vspace{-2mm}
\end{figure}

\section{Analysis and Methodology on Zero-Base GFSL} \label{sec:anaylsis}

This section first investigates why the basic framework of GFSL cannot perform well in zero-base GFSL, and then presents an effective weight normalization method that enables a simple transfer learning scheme to achieve even a better performance than existing GFSL methods assuming the existence of base data.

\subsection{Hardness of Zero-Base GFSL}
As presented in Figure \ref{fig:imagenet}, a model trained by the basic framework of GFSL, that is, fine-tuning only the joint classifier $\theta$ with $\D_{novel}$, turns out to be extremely inaccurate for base classes yet pretty accurate for novel classes despite freezing the feature extractor. Thus, the fine-tuned classifier gets highly biased toward novel classes due to the absence of base samples. 

More specifically, when freezing the feature extractor $\phi$, the feature space itself is fixed and so are the feature vectors $f_\phi(\mathbf{x})$ of all the samples $\mathbf{x}$. Consequently, it is 
decision boundaries in the space that are changed by fine-tuning the classifier and therefore matter to a biased prediction. Without a balanced dataset, it is challenging to properly learn hidden relationships between base and novel classes, leading to \textit{enlarged} (i.e., biased) decision boundaries of novel classes as depicted in Figure \ref{fig:decision}. In zero-base GFSL, it is our mission to find the best balance between decision boundaries of base and novel classes without retraining any base samples so that we can ideally make the best joint prediction. As decision boundaries between base and novel classes are formed where $\theta_{base}^\top f_\phi(\mathbf{x}) = \theta_{novel}^\top f_\phi(\mathbf{x})$, they are mainly affected by how the weight values of $\theta_{base}$ and $\theta_{novel}$ are distributed. In the following subsection, we therefore conduct a systematic analysis on the weight distributions of $\theta_{base}$ and $\theta_{novel}$.

\subsection{Analysis on Weight Distributions of Classifiers} 

We investigate the underlying distributions of weights of base and novel classifiers particularly in terms of their mean $\mu$ and variance $\sigma^2$. More specifically, given a classifier $\theta = [\theta_1, \theta_2, \ldots, \theta_{\abs{C}}] \in \mathbb{R}^{d \times \abs{C}}$, where $\theta_i \in \mathbb{R}^{d} $ represents the weight vector corresponding to class $i$, we first define the vectors of class-wise means and standard deviations of weight values as follows:
\begin{equation} \label{eq:map}
  \mu = 
  [\mu_1,\mu_2,\ldots,\mu_{\abs{C}}] \text{ and } \sigma = [\sigma_1,\sigma_2,\ldots,\sigma_{\abs{C}}], \nonumber \\
\end{equation}
where $\mu_i = \frac{1}{d} \sum_{j=1}^{d}{\theta_{i, j}}$ and $\sigma_i^2 = \frac{1}{d} \sum_{j=1}^{d}{(\theta_{i, j}-\mu_i)^2}$. Also, we use $\mu_{base}$, $\sigma_{base}$, $\mu_{novel}$, and $\sigma_{novel}$ to denote the vectors corresponding to the weights of base and novel classifiers (i.e., $\theta_{base}$ and $\theta_{novel}$). 

In order to examine how the weight values of the novel classifier are different from those of the base classifier, we compute the average of class-wise means and standard deviations as $\bar \mu = \frac{1}{\abs{C}} \sum_{i=1}^{\abs{C}}{\mu_i} \text{ and } \bar \sigma = \frac{1}{\abs{C}} \sum_{i=1}^{\abs{C}}{\sigma_i}$ and compare $\bar \mu_{novel}$ and $\bar \sigma_{novel}$ with $\bar \mu_{base}$ and $\bar \sigma_{base}$. As observed in Figure \ref{fig:shifting}, we discover the following two undesirable facts, both of which should be tackled to obtain a balanced joint classifier.


\smalltitle{L2-norms proportional to standard deviations}
We first discover that the variance of $\theta_{base}$ is greater than that of $\theta_{novel}$, i.e., $\bar \sigma_{base} > \bar \sigma_{novel}$, as shown in Figure \ref{fig:shifting}. This is somewhat intuitive in that the base classifier is trained on a large number of samples in $\D_{base}$ that can increase the diversity, whereas the novel classifier cannot have such a large diversity due to the lack of training samples in $\D_{novel}$. What is \eat{maybe change}not quite obvious is that these standard deviations are indeed proportional to the L2-norms of weights in neural networks as the following proposition.
\begin{proposition} \label{thm:std}
Consider a parameter $\theta$ of $N$ weights in a neural network with the assumption that $\theta$ is randomly initialized by a Gaussian distribution with zero mean and trained by a uniformly distributed dataset. Then, it holds that $\sigma  = {\rVert \theta \rVert_2 \cdot \frac{1}{\sqrt{N}}}$.
\end{proposition}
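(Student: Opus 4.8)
The plan is to reduce the claimed identity to a single condition on the weights, namely that their mean vanishes, and then argue that this condition follows from the two stated hypotheses. I would treat $\theta$ as a vector of $N$ scalar weights $\theta_1,\ldots,\theta_N$ and, consistently with the class-wise definitions introduced earlier, write $\mu = \frac{1}{N}\sum_{j=1}^N \theta_j$ and $\sigma^2 = \frac{1}{N}\sum_{j=1}^N (\theta_j - \mu)^2$.

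The first step is purely algebraic. Expanding the square and using the standard variance decomposition (K\"onig--Huygens) gives
\begin{equation}
\sigma^2 = \frac{1}{N}\sum_{j=1}^N \theta_j^2 - \mu^2 = \frac{\|\theta\|_2^2}{N} - \mu^2, \nonumber
\end{equation}
since $\sum_{j=1}^N \theta_j^2 = \|\theta\|_2^2$ by definition of the L2-norm. Hence the proposition is \emph{equivalent} to showing $\mu = 0$: once the $\mu^2$ term is dropped, taking the positive square root yields exactly $\sigma = \|\theta\|_2 / \sqrt{N}$.

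The second and crucial step is to justify $\mu = 0$ from the assumptions. At initialization this is immediate: a zero-mean Gaussian initialization makes the empirical mean $\mu$ concentrate at $0$ (exactly in expectation, and up to $O(1/\sqrt{N})$ fluctuations for large $N$). The real content is that training does not shift this mean, and here the "uniformly distributed dataset" hypothesis does the work. For a linear classifier trained with cross-entropy, the gradient with respect to a class weight $\theta_{i,j}$ is proportional to the residual $(p_i - y_i)$ times the feature coordinate, so the induced update to the per-class mean $\mu_i$ is proportional to the average residual weighted by the mean feature value. Under a balanced/uniform dataset these contributions cancel in aggregate, so the drift term in the update of $\mu$ vanishes and the mean stays pinned at its initial value of zero throughout training. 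Substituting $\mu = 0$ into the identity above closes the argument.

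I expect the main obstacle to be precisely this symmetry argument for $\mu = 0$: the algebraic identity is trivial, but showing that gradient descent preserves the zero mean holds only in an expected (or asymptotic-in-$N$) sense and relies essentially on uniformity to kill the otherwise nonzero drift term. I would therefore state the result as holding in expectation, and explicitly flag that the uniformity hypothesis is exactly what prevents any class from systematically biasing the accumulated gradient and thereby shifting $\mu$ away from zero.
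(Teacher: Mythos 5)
Your proposal is correct and follows essentially the same route as the paper's own proof: both reduce the identity to the claim that the mean of the weights is zero (justified by the zero-mean Gaussian initialization together with the cancellation of positive and negative gradients under a uniform dataset) and then apply the variance decomposition $\sigma^2 = \frac{1}{N}\sum_j \theta_j^2 - \mu^2 = \frac{\lVert\theta\rVert_2^2}{N} - \mu^2$. Your version is somewhat more careful than the paper's, which works with population expectations and an approximation $E[\theta^2] \approx \frac{1}{N}\sum_i \theta_i^2$, whereas you keep everything in the empirical statistics consistent with the paper's earlier definitions and correctly flag that the zero-mean step holds only in expectation.
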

\begin{proof}
Given $E[\theta] = 0$, we have: $\sigma = \sqrt{E[\theta-E(\theta)]^2} = \sqrt{E[\theta^2]} \approx \sqrt{\frac{1}{N}\sum_{i=1}^{N}\theta_i^2} = {\frac{1}{\sqrt{N}} \rVert \theta \rVert _2}.$
\end{proof}
By Proposition \ref{thm:std}, $\rVert\theta_{base}\rVert_2 > \rVert\theta_{novel}\rVert_2$ is implied from the observation of $\bar \sigma_{base} > \bar \sigma_{novel}$. Somewhat surprisingly, this contradicts to the highly biased results toward novel classes in Figure \ref{fig:imagenet} in the sense that many existing works \cite{HouPLWL19,KangXRYGFK20,ZhaoXGZX20} report that larger weight norms of a class tend to produce larger logits leading to more biased predictions toward the corresponding class. Thus, if we follow the existing strategy of equalizing the weight norms \cite{HouPLWL19,KangXRYGFK20,ZhaoXGZX20} by increasing the smaller ones yet decreasing the larger ones, the novel-biased classifier in zero-base GFSL would get even more biased toward novel classes, to be confirmed in our experiments.

Then, how could the classifier be highly biased toward novel classes even though the weight norms of novel classes are less than those of base classes? The true answer to this question lies in the average weight of each classifier rather than its weight norm.

\smalltitle{Mean shifting phenomenon}
In terms of means of weight distributions, we observe the fact that the average weight of the novel classifier is positively shifted (i.e., $\bar \mu_{novel} > 0$) while that of the base classifier keeps almost zero (i.e., $\bar \mu_{base} \approx 0$). We call this situation \textit{mean shifting phenomenon} commonly observed in all the graphs of Figure \ref{fig:shifting}. 

Positively shifted $\bar \mu_{novel}$ indicates that each weight of $\theta_{novel}$ on the average has a more positive value. Recalling that decision boundaries are constructed at $\theta_{base}^\top f_\phi(\mathbf{x}) = \theta_{novel}^\top f_\phi(\mathbf{x})$, more positive weights of $\theta_{novel}$ would increase $\theta_{novel}^\top f_\phi(\mathbf{x})$, and therefore enlarge the decision boundaries of novel classes. This implies that mean shifting phenomenon is a more critical reason behind the novel-biased model in zero-base GFSL. As mentioned above, however, the existing works \cite{HouPLWL19,KangXRYGFK20,ZhaoXGZX20} only focus on equalizing the weight norms, which turn out to be proportional to the standard deviations of weights, between base and novel classes, and hence their normalization method alone fails to achieve a satisfactory performance.

\begin{figure}[t!]
	\centering
    \includegraphics[width=0.98\columnwidth]{./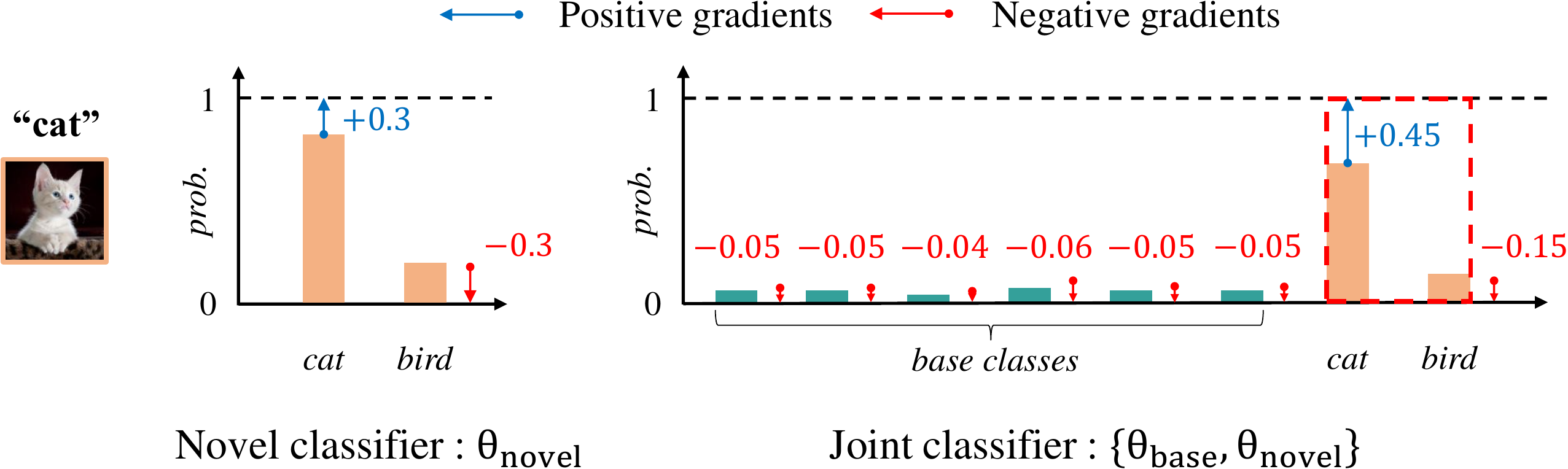}
    \caption{An illustrative example for the positively shifted mean of weights for novel classes, where a cat image gets probabilities $[0.7~0.3]$ from $\theta_{novel}$ but its probabilities becomes $[0.05~0.05~0.04~0.06~0.05~0.05~|~0.55~0.15]$ in $\theta = \{\theta_{base}, \theta_{novel}\}$.}
    \label{fig:prob}
\end{figure}

\smalltitle{What makes mean shifted}
Let us now analyze why mean shifting phenomenon occurs to the novel classifier in zero-base GFSL. To this end, we first consider how we update the weights of each classifier by gradient descent as $\theta = \theta - \eta {\partial \L \over \partial \theta }(f_\phi(\mathbf{x}),\theta)$, where the gradient is:
\begin{equation} \label{eq:gradient}
    \begin{aligned}
    {\partial  \over \partial \theta } \L (f_\phi(\mathbf{x}),\theta) = {\partial \L \over \partial z}\cdot{{\partial z \over \partial \theta}} = f_\phi(\mathbf{x}) (\mathbf{y} - p(\mathbf{x})).
    \end{aligned}
\end{equation}
Since $f_\phi(\mathbf{x})$ is always non-negative due to the ReLU function and $p(\mathbf{x})$ is also a non-negative probability value, the gradient is positive only if $\mathbf{y}_i = 1$ (i.e., for the probability of the class label of $\mathbf{x}$) and negative for all the other class probabilities. In training a balanced dataset of all the classes uniformly distributed, the total positive gradient for each class eventually gets similar to the absolute value of its total negative gradient, which is why $\bar \mu \approx 0$. Thus, if we train $\D_{novel}$ only to  $\theta_{novel}$, $\mu_{novel}$ would also be close to zero. 

However, when fine-tuning the joint classifier $\theta = \{\theta_{base}, \theta_{novel}\}$ with only $\D_{novel}$, the total positive gradient of $\theta_{novel}$ increases whereas the absolute value of its total negative gradient decreases because all the output probabilities of $\theta_{novel}$ together become smaller in training $\theta$ than they used to be in training only $\theta_{novel}$. To illustrate, consider an example of Figure \ref{fig:prob}, where output probabilities of $\theta_{novel}$ are larger than those of $\theta = \{\theta_{base}, \theta_{novel}\}$ for a given training image of \textit{`cat'}. Thus, even if $\theta_{novel}$ outputs $[0.7~~0.3]$ for the cat image, the joint classifier $\theta$ would output something like $[0.05~~0.05~~0.04~~0.06~~0.05~~0.05~|~0.55~~0.15]$ for 6 base classes followed by 2 novel classes. Consequently, the positive gradient caused by the novel class \textit{`cat'} is increased from $0.3$ to $0.45$, but the absolute value of the negative gradient for the second novel class \textit{`bird'} is decreased from $|-0.3|$ to $|-0.15|$. Furthermore, $\theta_{novel}$ can receive negative gradients only from the instances of the other novel classes due to the absence of base instances. This makes $\mu_{novel}$ to be more positively shifted. 

As for $\theta_{base}$, only negative gradients will arrive during the entire fine-tuning process, but its total amount would not be that much because the well-trained $\theta_{base}$ is not likely to be overconfident to irrelevant novel classes particularly when $\theta_{novel}$ gets used to identifying their corresponding novel classes. This is why $\mu_{base}$ still keeps to be zero after fine-tuning.

\subsection{Solution: Mean-Variance Classifier Normalization} \label{sec:solution}

\begin{figure*}[t!]
   \centering 
    \includegraphics[width=1.7\columnwidth]{./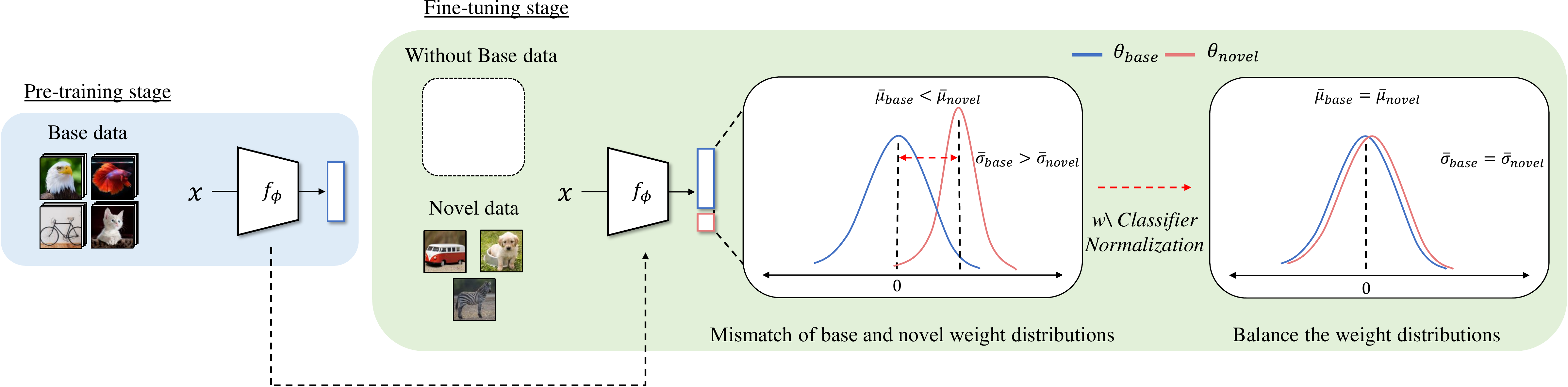}
   \caption{Overview of Mean-Variance Classifier Normalization. In the pre-training stage, we train the entire model on all base classes. In the fine-tuning stage, we normalize the novel classifier by online mean centering in the process of training and adjust the trained weights of the base and novel classifiers by re-scaling them using the standard deviation ratio.}
   \label{fig:framework}
\end{figure*}

To fulfill both the zero-mean and a balanced variance of base and novel classifiers, we propose a simple yet effective normalization method, called \textit{Mean-Variance Classifier Normalization} (\textit{MVCN}). MVCN takes two essential steps, namely \textit{online mean centering} and \textit{offline variance balancing}. We perform online mean centering during fine-tuning the joint classifier, and then proceed to balance variances of weights as a post processing step once the classifier is well trained. The entire process is outlined in Figure \ref{fig:framework}.

\smalltitle{Online mean centering} First, we keep normalizing the weights of the novel classifier to have zero-mean in the process of training by:
\begin{equation}
\begin{aligned}
  \hat \theta_{novel} &= \theta_{novel} - \mu_{novel}. \\
\end{aligned}
\end{equation}
When fine-tuning the joint classifier, online mean centering is performed only on the novel classifier. This directly reduces the positive weights of the novel classifier and keeps the zero-mean while learning the features of novel classes. Note that we do not give any constraints to the variance of weights during fine-tuning, but rather allow each classifier to learn as many required features as possible.


\smalltitle{Offline variance balancing} 
Once the fine-tuning stage is done, we adjust the weights of base classifier according to the ratio of standard deviations as follows:
\begin{equation}
\begin{aligned}
  \hat \theta_{base} &= {\bar \sigma_{novel} \over \sigma_{base}} \cdot \theta_{base}, \\
\end{aligned}
\end{equation}
where $\bar \sigma_{novel}$ is the average of class-wise standard deviations for all novel classes. We re-scale each weight of the base classifier by multiplying the ratio of the standard deviation of the novel classifier to that of the base classifier (i.e., $\bar \sigma_{novel} \over \sigma_{base}$), and thereby the weight variance of the base classifier gets similar to that of the novel classifier. Note that we do not directly normalize $\theta$ by their $\sigma$, which makes the zero-mean and unit-variance ${\theta - \mu \over \sigma}$, because the standard deviation of weights is much smaller than 1.



\smalltitle{Post linear optimization without base data}
Finally, we introduce class-wise learnable parameters $\gamma_i$ and $\beta_i$ for $i \in C_{base} \cup C_{novel}$, which are intended for further optimizing decision boundaries of novel classes. With freezing $\theta = \{\theta_{base}, \theta_{novel}\}$, $\gamma_i$ and $\beta_i$ are similarly trained by Eq. (\ref{eq:finetuning}) and used at inference time as $\gamma \cdot \theta^\top f_\phi(\mathbf{x}) + \beta$. Through these additional parameters, we can further improve the performance of novel classes particularly in an extreme case of 1-shot learning. Note that this optimization process is performed still without using any base samples.

\section{Experiments} \label{sec:ex}

\subsection{Experimental Settings}

\smalltitle{Datasets}
We compare our method with the state-of-the-art (SOTA) GFSL methods using two datasets, \textit{mini-ImageNet} \cite{VinyalsBLKW16} and \textit{tiered-ImageNet} \cite{RenTRSSTLZ18}, which are most widely used in the literature of GFSL. The \textit{mini-ImageNet} contains 100 classes and 60,000 sample images from \textit{ImageNet} \cite{RussakovskyDSKS15}, which are then randomly split into 64 training classes, 16 validation classes, and 20 testing  classes, proposed by \cite{RaviL17}. The \textit{tiered-ImageNet} is another subset of \textit{ImageNet}, containing 608 classes, which are then split into 351 training, 97 validation, 160  testing classes. This setting is more challenging since base classes and novel classes come from different super classes. The size of all images in both datasets is $84 \times 84$. In addition, we test whether our method works better than the SOTA \textit{GZSL} (\textit{Generalized Zero-Shot Learning}) methods using the three most common datasets, \textit{CUB} \cite{welinder2010caltech}, \textit{AWA1} \cite{LampertNH09}, and \textit{AWA2} \cite{XianLSA19}.

\smalltitle{Implementation details}
We implement all the methods in PyTorch, and train each model on a machine with NVIDIA A100. We use ResNet-12 \cite{HeZRS16} for \textit{mini-ImageNet} and ResNet-18 for \textit{tiered-ImageNet}, according to \cite{RenLFZ19, YoonKSM20}.  For \textit{CUB}, \textit{AWA1} and \textit{AWA2}, we commonly use ResNet-101. Full details of our settings are covered in Appendix.


\eat{During the test time, we evaluate our method by taking an average over 600 randomly sampled tasks. For all the experiments, we consider 5-way classification with 1, 5, and 10 novel train samples (i.e. shots). The 10-shot task has never been tested in the literature of GFSL, and hence we report all the results for 10-shot tasks obtained on our own by executing the implementations of the compared methods. For XtarNet \cite{YoonKSM20}, we reproduce all the results by the authors' implementation, and the rest of the results are referred from \cite{KuklevaKS21}. Full details are covered in our supplementary material.}

\subsection{Experimental Results}

\begin{table*}[]
\renewcommand{\arraystretch}{1.4}
\centering
\resizebox{0.87\textwidth}{!}{%
\begin{tabular}{l|ccc|ccc|ccc}
\toprule 
\multirow{2}{*}[-0.5em]{\textbf{Methods/ Shots}}                                           & \multicolumn{3}{c|}{\textbf{1-shot}}                                & \multicolumn{3}{c|}{\textbf{5-shot}}                               & \multicolumn{3}{c}{\textbf{10-shot}}                               \\ \cmidrule(r){2-10}
                                                                                   & \textbf{Novel} & \multicolumn{1}{c||}{\textbf{Base}} & \textbf{All}  & \textbf{Novel} & \multicolumn{1}{c||}{\textbf{Base}} & \textbf{All} & \textbf{Novel} & \multicolumn{1}{c||}{\textbf{Base}} & \textbf{All} \\ \cmidrule(r){1-10}
GcGPN \cite{ShiSSW20}                                                                             & 39.86          & \multicolumn{1}{c||}{54.65}         & 47.25         & 56.32          & \multicolumn{1}{c||}{59.30}         & 57.81        & -              & \multicolumn{1}{c||}{-}             & -            \\
IW    \cite{QiBL18}                                                                             & 41.32          & \multicolumn{1}{c||}{58.04}         & 49.68         & 59.27          & \multicolumn{1}{c||}{58.68}         & 58.98        & 45.85         & \multicolumn{1}{c||}{72.53}        & 59.19 $\pm$ 0.19 \\
DFSL  \cite{GidarisK18}                                                                             & 31.25          & \multicolumn{1}{c||}{17.72}         & 39.49         & 46.96          & \multicolumn{1}{c||}{58.92}         & 52.94        & 66.04         & \multicolumn{1}{c||}{69.87}        & 67.95 $\pm$ 0.17        \\
AAN  \cite{RenLFZ19}                                                                              & 45.61          & \multicolumn{1}{c||}{63.92}         & 54.76         & 60.82          & \multicolumn{1}{c||}{64.14}         & 62.48        & 66.33         & \multicolumn{1}{c||}{62.49}        & 64.41 $\pm$ 0.16 \\
LCwoF      \cite{KuklevaKS21}                                                                        & 53.78          & \multicolumn{1}{c||}{62.89}         & 57.84         & 68.58          & \multicolumn{1}{c||}{64.53}         & 66.55        & 76.71 $\pm$ 0.58    & \multicolumn{1}{c||}{62.86 $\pm$ 0.07}   & 69.78 $\pm$ 0.28  \\
XtarNet   \cite{YoonKSM20}                                                                         & 47.04 $\pm$ 0.29  & \multicolumn{1}{c||}{64.17 $\pm$ 0.22} & 55.61 $\pm$ 0.17 & 62.46 $\pm$ 0.25   & \multicolumn{1}{c||}{\textbf{70.57} $\pm$ \textbf{0.20}}    & 66.52 $\pm$ 0.16 & 70.46 $\pm$ 0.23   & \multicolumn{1}{c||}{\textbf{70.88} $\pm$ \textbf{0.21}}  & 70.67 $\pm$ 0.15 \\ \cmidrule(r){1-10}
\begin{tabular}[c]{@{}c@{}} MVCN (ours)\end{tabular} & \textbf{51.72} $\pm$ \textbf{0.64}    & \multicolumn{1}{c||}{\textbf{65.81} $\pm$ \textbf{0.08}}   & \textbf{58.77} $\pm$ \textbf{0.35}   & \textbf{75.20} $\pm$ \textbf{0.46}     & \multicolumn{1}{c||}{67.62 $\pm$ 0.09}   & \textbf{71.22} $\pm$ \textbf{0.26}  & \textbf{78.7} $\pm$ \textbf{0.51}     & \multicolumn{1}{c||}{68.06 $\pm$ 0.09}   & \textbf{73.38} $\pm$ \textbf{0.25}  \\ 
\bottomrule
\end{tabular}
}

\renewcommand{\arraystretch}{1.4}
\centering
\resizebox{0.87\textwidth}{!}{%
\begin{tabular}{l|ccc|ccc|ccc}
\toprule 
\multirow{2}{*}[-0.5em]{\textbf{Methods/ Shots}}                                           & \multicolumn{3}{c|}{\textbf{1-shot}}                                & \multicolumn{3}{c|}{\textbf{5-shot}}                               & \multicolumn{3}{c}{\textbf{10-shot}}                               \\ \cmidrule(r){2-10}
                                  & \textbf{Novel} & \multicolumn{1}{c||}{\textbf{Base}} & \textbf{All} & \textbf{Novel} & \multicolumn{1}{c||}{\textbf{Base}} & \textbf{All} & \textbf{Novel} & \multicolumn{1}{c||}{\textbf{Base}} & \textbf{All} \\ \cmidrule(r){1-10}
IW    \cite{QiBL18}                                 & 44.95          & \multicolumn{1}{c||}{62.53}         & 53.74           & 71.85          & \multicolumn{1}{c||}{56.11}         & 63.98           & 74.01         & \multicolumn{1}{c||}{61.67}        & 65.50 $\pm$ 0.16    \\
DFSL  \cite{GidarisK18}                              & 47.32          & \multicolumn{1}{c||}{36.10}         & 41.71           & 67.94          & \multicolumn{1}{c||}{39.08}         & 53.51           & 73.97          & \multicolumn{1}{c||}{56.94}         & 65.46 $\pm$ 0.17    \\
AAN  \cite{RenLFZ19}                               & 54.39          & \multicolumn{1}{c||}{55.85}         & 55.12           & 57.76          & \multicolumn{1}{c||}{64.13}         & 60.95          & 70.88         & \multicolumn{1}{c||}{57.49}        & 64.18 $\pm$ 0.17    \\
LCwoF      \cite{KuklevaKS21}                             & 57.13          & \multicolumn{1}{c||}{60.39}         & 58.76           & 69.05          & \multicolumn{1}{c||}{63.44}         & 66.25           & 79.20 $\pm$ 0.72    & \multicolumn{1}{c||}{61.76 $\pm$ 0.10}   & 70.57 $\pm$ 0.41     \\
XtarNet    \cite{YoonKSM20}                       & 58.90 $\pm$ 0.29   & \multicolumn{1}{c||}{\textbf{64.02} $\pm$ \textbf{0.22}}  & 61.46 $\pm$ 0.18     & 74.49 $\pm$ 0.24   & \multicolumn{1}{c||}{63.13 $\pm$ 0.21}  & 68.81 $\pm$ 0.16    & 78.36 $\pm$ 0.23    & \multicolumn{1}{c||}{63.08 $\pm$ 0.22}   & 70.72 $\pm$ 0.16     \\ \cmidrule(r){1-10}
MVCN (ours)                      & \textbf{62.11} $\pm$ \textbf{0.70}     & \multicolumn{1}{c||}{61.23 $\pm$ 0.22}   & \textbf{61.67} $\pm$ \textbf{0.31}           & \textbf{79.59} $\pm$ \textbf{0.55}    & \multicolumn{1}{c||}{\textbf{66.3} $\pm$ \textbf{0.20}}     & \textbf{72.34} $\pm$ \textbf{0.28}           & \textbf{83.06} $\pm$ \textbf{0.56}    & \multicolumn{1}{c||}{\textbf{67.46} $\pm$ \textbf{0.16}}   & \textbf{75.26} $\pm$ \textbf{0.27}     \\ 
\bottomrule
\end{tabular}%
}
\vspace{-1.5mm}
\caption{Performance (\%) comparison with GFSL methods on \textit{mini-ImageNet} (top) and \textit{tiered-ImageNet} (bottom).} \label{tab:tiered}
\vspace{1.5mm}

\renewcommand{\arraystretch}{1.4}
\centering
\resizebox{0.82\textwidth}{!}{%
\begin{tabular}{l|cccc|cccc|cccc}
\toprule 
\textbf{Datasets}                                           & \multicolumn{4}{c|}{\textbf{CUB}}                                & \multicolumn{4}{c|}{\textbf{AWA1}}                               & \multicolumn{4}{c}{\textbf{AWA2}}                               \\ \cmidrule(r){1-13}
                                \textbf{Methods/ Shots}  & \textbf{1-shot} & {\textbf{2-shot}} & \textbf{5-shot} & \textbf{10-shot} & {\textbf{1-shot}} & \textbf{2-shot} & \textbf{5-shot} & {\textbf{10-shot}} & \textbf{1-shot} & \textbf{2-shot} & \textbf{5-shot} & \textbf{10-shot}\\ \cmidrule(r){1-13}
ReViSE    \cite{TsaiHS17}                                 & 36.3          & 
41.1      & 44.6           & 50.9          & 56.1         & 60.3           & 64.1         & 67.8        & -      & -     & -      & -       \\
CA-VAE  \cite{SchonfeldESDA19}                              & 50.6             & 
54.4         & 59.6           & 62.2          & 64.0             & 71.3           & 76.6          & 79.0          & 41.8           & 52.7          & 66.5   & 76.7      \\
DA-VAE  \cite{SchonfeldESDA19}                               & 49.2           & 54.6        & 58.8          & 60.8            & 68.0           & 73.0          & 75.6           & 76.8          & 68.6         & 77.1            & 81.8             & 81.3          \\
CADA-VAE      \cite{SchonfeldESDA19}                             & 55.2           & 59.2          & 63.0            & 64.9           & 69.6          & 73.7           & 78.1   & 80.2    & 73.6         &  78.9        & 81.9       & 85.0           \\
DRAGON    \cite{SamuelAC21}                       & 55.3       & 59.2   & 63.5      & \textbf{67.8}     & 67.1   & 69.1     & 76.7     & 81.9     & -      & -     & -      & -           \\ \cmidrule(r){1-13}
MVCN (ours)                      & \textbf{57.3}     & \textbf{61.6}    & \textbf{65.4}           & \textbf{67.8}    & \textbf{69.9}     & \textbf{76.4}           & \textbf{81.2}    &  \textbf{82.2}    & \textbf{77.1}     & \textbf{83.5}           & \textbf{87.4}    &  \textbf{87.7}   \\ 
\bottomrule
\end{tabular}%
}
\vspace{-1.5mm}
\caption{Performance (\%) comparison with GZSL methods on \textit{CUB, AWA1}, and \textit{AWA2}.} \label{tab:three}
\vspace{1.5mm}
\end{table*}


\begin{figure}[t]
	\centering
    \subfigure[\label{fig:conf:a}w/o normalization]{\hspace{0mm}\includegraphics[width=0.35\columnwidth]{./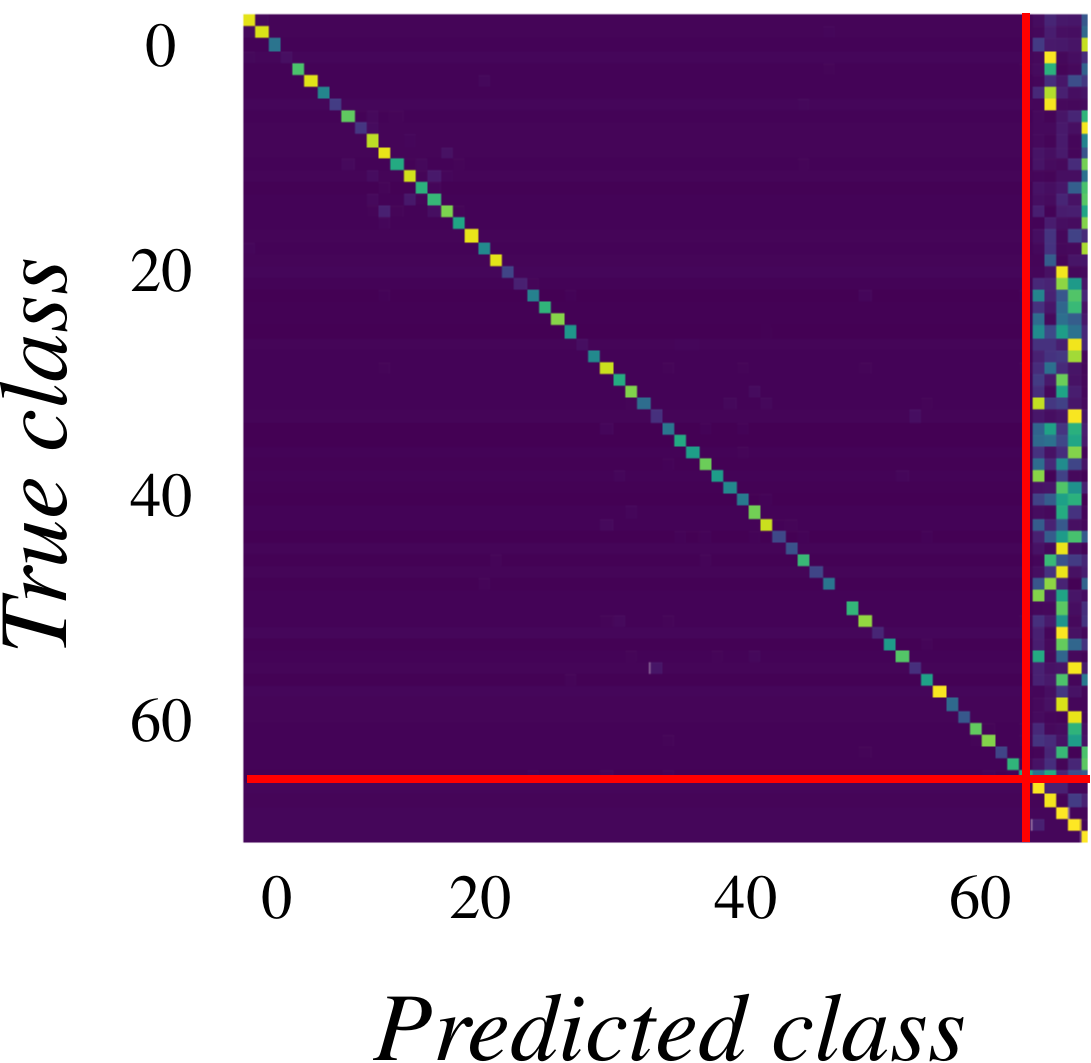}\hspace{4mm}}
    \subfigure[\label{fig:conf:b}w/ normalization (ours)]{\hspace{4mm}\includegraphics[width=0.35\columnwidth]{./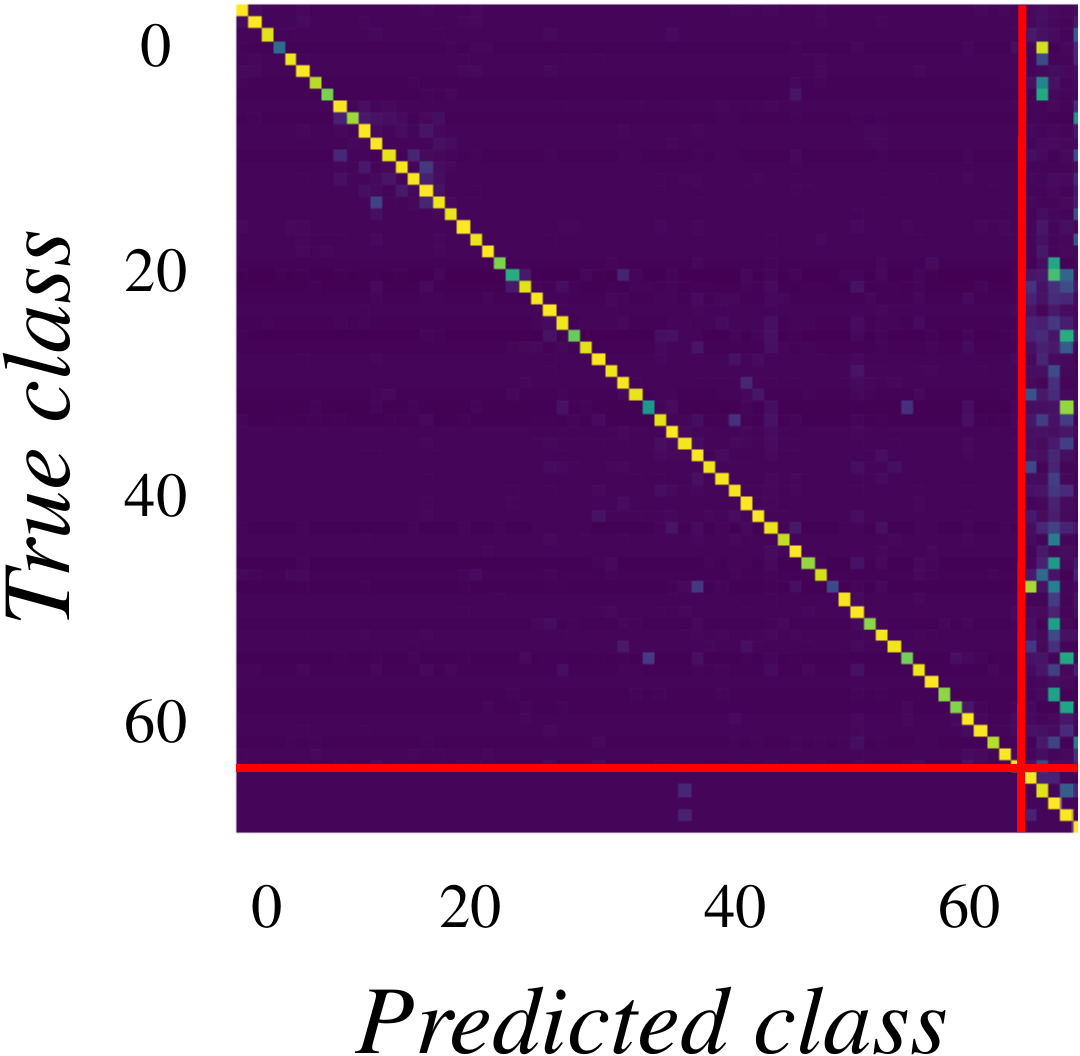}\hspace{4mm}}
    \caption{Confusion matrices on \textit{mini-ImageNet}, where the rightmost ones after red lines are novel classes.}
	\label{fig:confusion}
	\vspace{3mm}

	\centering
	\includegraphics[height=1.8mm]{./figure/distribution_key.pdf}
    \subfigure[\label{fig:conf:a}w/o normalization]{\hspace{0mm}\includegraphics[width=0.4\columnwidth]{./figure/figure_dist_5shot.pdf}\hspace{1mm}}
    \subfigure[\label{fig:conf:b}w/ normalization (ours)]{\hspace{1mm}\includegraphics[width=0.4\columnwidth]{./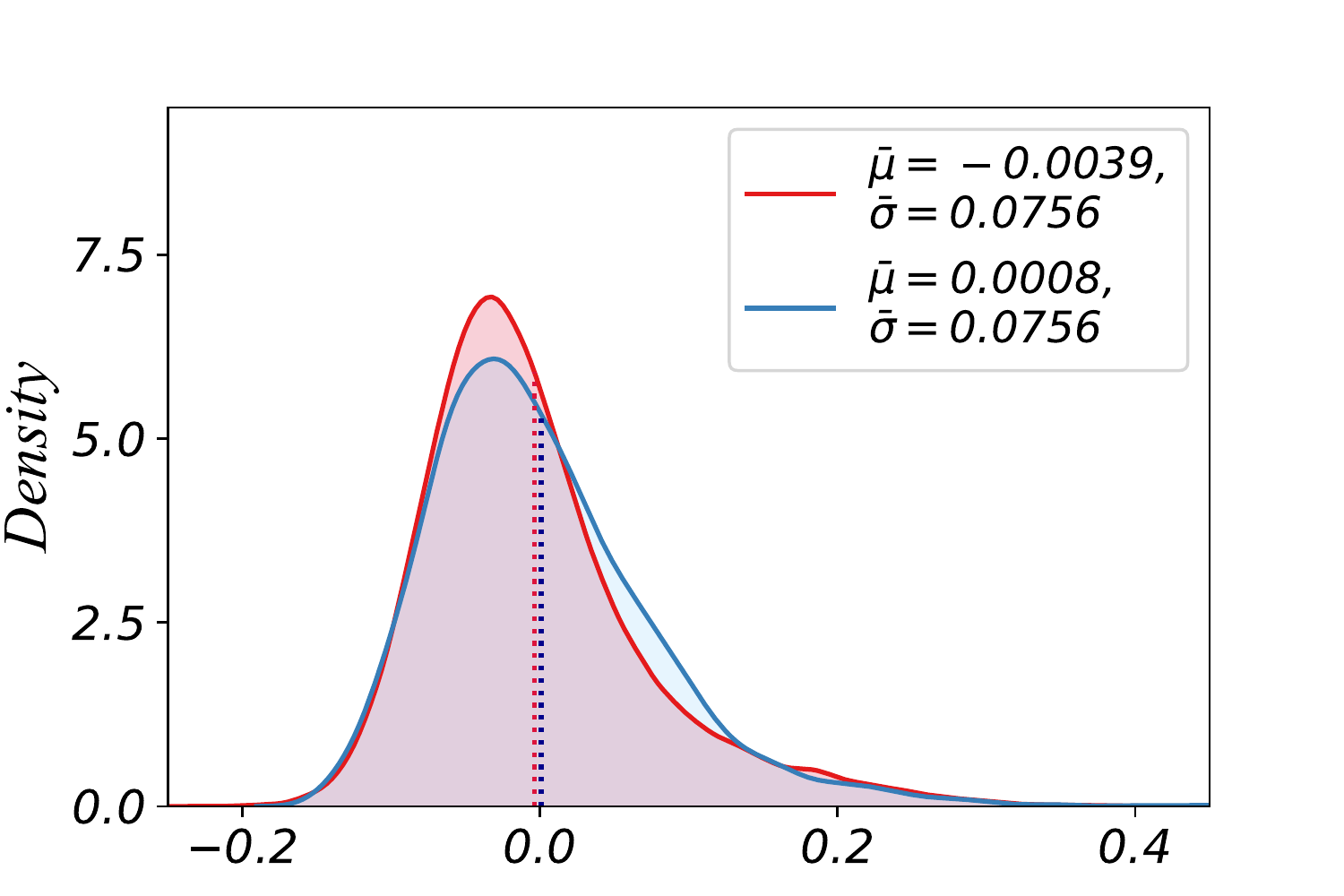}\hspace{0mm}}
    \caption{Weight distributions of base and novel classifiers of ResNet-18 on \textit{tiered-ImageNet} for 5-shot.}
	\label{fig:dist:ours}
	\vspace{-3mm}
\end{figure}

\smalltitle{Overall performance}
Table \ref{tab:tiered} summarizes the overall performance of the compared GFSL methods on \textit{ImageNet}. Even without base data, it is clearly observed that our MVCN method outperforms the other GFSL methods exploiting base samples. In \textit{tiered-ImageNet}, which is a more challenging scenario where base and novel classes are quite different, MVCN seems to be even more effective when it is in \textit{mini-ImageNet}. This is probably because a larger number of base classes can be pretrained in \textit{tiered-ImageNet} than in \textit{mini-ImageNet}, and the simple transfer learning scheme of MVCN takes more advantage of this well-trained knowledge of many base classes. Especially when we have more novel samples like 5 or 10-shot, the performance gap between ours and the other methods becomes larger as shown in Table \ref{tab:tiered}. The underlying reason is mean shifting phenomenon gets stronger as we fine-tune more novel classes (see Figure \ref{fig:shifting}). Although XtarNet \cite{YoonKSM20} occasionally performs slightly better on base classes than ours, MVCN still manages to outperform XtraNet on novel classes with clear margins. Furthermore, Table \ref{tab:three} shows that MVCN is superior to all the SOTA GZSL methods on \textit{CUB}, \textit{AWA1}, and \textit{AWA2}. Note that each image in these datasets is augmented with textual attributes that are crucially utilized by most GZSL methods while our method does not exploit any extra information.



\smalltitle{Qualitative results}
The confusion matrices of with and without normalization are presented in Figure \ref{fig:confusion}. We can see that the linear classifier without normalization tends to erroneously classify base samples as novel classes, but not vice versa. This observation confirms our claim that decision boundaries of novel classes get unnecessarily large in the fine-tuned classifier. However, after applying our normalization scheme, it is well observed that the classifier seems no longer to be confused about base classes with novel classes.

In addition, as shown in Figure \ref{fig:dist:ours}, we can observe that MVCN actually resolves \textit{mean shifting phenomenon} and achieves a balanced variance between base and novel classes as the classifier with MVCN has a $\bar \mu_{novel}$ 62 times smaller than it is without normalization. Also, we show that $\bar \sigma_{base}$ and $\bar \sigma_{novel}$ become almost the same after normalization, implying that there is no bias toward either novel or base classes.

\begin{table}[]
\renewcommand{\arraystretch}{2.0}
\centering
\resizebox{\columnwidth}{!}{%
\begin{tabular}{ccc|ccc|ccc}
\toprule 
\multicolumn{3}{c|}{{\textbf{Methods/ Shots}}} & \multicolumn{3}{c|}{{\textbf{1-shot}}} & \multicolumn{3}{c}{{\textbf{5-shot}}} \\ 
\cmidrule(r){1-9}
\textbf{MC} & \textbf{VB} & \textbf{LO} & \textbf{Novel}        & \multicolumn{1}{c||}{\textbf{Base}}         & \textbf{All}         & \textbf{Novel}         & \multicolumn{1}{c||}{\textbf{Base}}        & \textbf{All}    \\
\cmidrule(r){1-9} 
 \xmark & \xmark & \xmark & 60.34 $\pm$ 0.62  & \multicolumn{1}{c||}{47.28 $\pm$ 0.07}  & 53.81 $\pm$ 0.31 &79.97 $\pm$ 0.44   & \multicolumn{1}{c||}{38.44 $\pm$ 0.07} & 59.21 $\pm$ 0.22   \\
 \cmidrule(r){1-9}
\cmark     & \xmark & \xmark & 36.19 $\pm$ 0.83           & \multicolumn{1}{c||}{\textbf{75.11}  $\pm$ \textbf{0.07}} & 55.65  $\pm$ 0.39         & 73.13 $\pm$ 0.48            & \multicolumn{1}{c||}{\textbf{68.91 $\pm$ 0.10}}          & 71.02 $\pm$ 0.28       \\
\cmark        & \cmark     & \xmark & 40.86  $\pm$ 0.66           & \multicolumn{1}{c||}{73.95   $\pm$ 0.12}         & 57.41    $\pm$ 0.30       & 73.49   $\pm$ 0.59 & \multicolumn{1}{c||}{68.71    $\pm$ 0.10}       & 71.10   $\pm$ 0.29          \\
\cmark        & \cmark            & \cmark           & \textbf{51.72}  $\pm$ \textbf{0.64} & \multicolumn{1}{c||}{65.81 $\pm$ 0.08}            & \textbf{58.77} $\pm$ \textbf{0.35}  & \textbf{75.20} $\pm$ \textbf{0.46}              & \multicolumn{1}{c||}{67.62 $\pm$ 0.09}  & \textbf{71.22} $\pm$ \textbf{0.26}   \\
\bottomrule
\end{tabular}
}
\vspace{-1.5mm}
\caption{Ablation study on \textit{mini-ImageNet} to analyze the effectiveness of three components of our method, which are mean centering (MC), variance balancing (VB), and linear optimization (LO).} \label{tab:abalation}
\end{table}


\smalltitle{Ablation study}
To examine the effect of each component of our method, which are online mean centering (MC), offline variance balancing (VB), and post linear optimization (LO), we conduct an ablation study in zero-base GFSL using \textit{mini-ImageNet}. Table \ref{tab:abalation} shows that applying mean centering solely improves the accuracy of base classes by more than $25 \%$ in all shot cases, which shows its effectiveness of mitigating the novel-bias problem. In addition to MC, variance balancing tends to make the model more accurate for novel classes yet a bit less accurate for base classes. Considering the results of Table \ref{tab:tiered}, note that MC together with VB beats the SOTA GFSL method XtarNet by large margins in 5-shot case, and shows comparable performance to the best 1-shot learning method LCwoF. Finally, when optimizing linear parameters with a tiny overhead, we can further improve the performance with $1.46 \%$ higher accuracy for the 1-shot case. Through all the components, we get $4.96 \%$ and $12.01 \%$ performance gains for 1-shot and 5-shot, respectively.

\section{Conclusion} \label{sec:con}
In this paper, we conducted the first study on zero-base GFSL, where we need to fine-tune a joint classifier with only a few samples of novel classes. Through a systematic analysis, we discovered that mean shifting phenomenon was the critical reason behind a novel-biased classifier, but the existing GFSL methods have been trying to equalize only the variance of weights. Based on our findings, we proposed a simple yet effective weight normalization method without using any base samples, which can even beat the existing GFSL methods that utilize the base dataset. Even though this work dealt with only linear classifiers, which is our limitation, our next plan is to extend our analysis to cover various types of classifiers.

\section{Acknowledgments}
This work was supported in part by Institute of Information \& communications Technology Planning \& Evaluation (IITP) grants funded by the Korea government(MSIT) (No.2022-0-00448, Deep Total Recall: Continual Learning for Human-Like Recall of Artificial Neural Networks, No.RS-2022-00155915, Artificial Intelligence Convergence Innovation Human Resources Development (Inha University)), and in part by the National Research Foundation of Korea (NRF) grants funded by the Korea government (MSIT) (No.2021R1F1A1060160, No.2022R1A4A3029480).

\bibliography{aaai23}   

\newpage
\onecolumn
\setcounter{table}{0}
\setcounter{figure}{0}
\renewcommand{\thetable}{A\arabic{table}}  
\renewcommand{\thefigure}{A\arabic{figure}}
\appendix

\section*{Appendix of Better Generalized Few-Shot Learning Even Without Base Data} \label{sec:appendix}

In this appendix, we describe more details about our experimental results including all the implementation details and the values of hyperparameters. Also, we present more experimental results, which are of independent interests but cannot be contained in the main manuscript due the space limit, namely (1) a more detailed ablation study, (2) comparison with the other baselines, (3) a performance test of applying mean centering either to all classifiers or to novel classifiers, (4) the effectiveness of our normalization method when exploiting base data, (5) considering more feature extractors, and (6) the effect of linear parameters.

\section{Implementation Details}
We apply standard data augmentation, with randomly resized crops, color jittering and horizontal flips.
In the pre-training stage, we train each model on the full dataset of base samples for about 100 or less epochs, and then we fine-tune only the classifier for novel classes with a much smaller learning rate during hundreds of iterations while freezing the feature extractor and batch normalization parameters. In all the models and datasets, we follow the settings widely adopted by the existing works, each detail of which is as follows.

\smalltitle{Training ResNet-12 on Mini-ImageNet} We train ResNet-12 on base classes for 90 epochs with SGD optimizer with momentum 0.9 and the initial learning rate 0.1 that is decayed by 0.1 at 60 epochs. We use a weight decay of 5e-4, and batch size is 60. For the fine-tuning stage, we train the classifier on novel classes for 500 iterations with learning rate 0.005 for 1-shot, 0.003 for 5-shot, and 0.001 for 10-shot learning. For class-wise learnable parameters (i.e., $\gamma$ and $\beta$), we fine-tune the classifier again for 500 iterations with the same learning rate for 1-shot, 50 iterations for 5-shot, 5 iterations for 10-shot. Similar to \cite{KuklevaKS21}, we use \textit{Dropblock} \cite{GhiasiLL18} as a regularizer for backbones and and change the number of filters from (64, 128, 256, 512) to
(64, 160, 320, 640).

\smalltitle{Training ResNet-18 on Tiered-ImageNet} ResNet-18 is pre-trained on base classes for 120 epochs with SGD optimizer with momentum 0.9 and the initial learning rate 0.1 that is decayed by 0.1 at 60 and 90 epochs. The weight decay is set to 5e-4, and batch size is 60. The classifier is fine-tuned for 500 iterations with learning rate 0.003 for 1-shot and 0.001 for 5, 10-shot learning. When performing linear optimization, we learn $\gamma$ and $\beta$ by fine-tuning the classifier again with learning rate 0.01 for 500 iterations for 1-shot, 50 iterations for 5-shot, 5 iterations for 10-shot. As in \textit{Tiered-ImageNet}, we use \textit{Dropblock}.

\smalltitle{Training ResNet-50 on ImageNet-800} For base classes, we train ResNet-50 for 90 epochs with SGD optimizer with momentum 0.9 and the initial learning rate  0.1 that is decayed by 0.1 at 30 and 50 epochs. The weight decay and batch size are set to 1e-4 and 256, respectively. Fine-tuning the classifier for novel classes is performed for 100 iterations with learning rate 0.003 for 1-shot and 0.001 for 5, 10-shot learning. Class-wise parameters are obtained by another fine-tuning stage of the classifier for 500 iterations with learning rate 0.05.

\smalltitle{Details of the datasets CUB, AWA1 and AWA2} \textit{CUB}, \textit{AWA1}, and \textit{AWA2} are the most widely used benchmark datasets for zero-shot based GFSL methods (a.k.a. GZSL methods). The \textit{CUB-200-2011} (\textit{CUB}) dataset \cite{welinder2010caltech} is a fine-grained dataset containing 11,788 images of 200 different classes of birds, which are then split into 150 training classes, and 50 testing classes. All images in \textit{CUB} are annotated with 15 part locations, 1 subcategory labels, 312 attributes and bounding boxes. 
\textit{Animal with Attribute 1} (\textit{AWA1}) \cite{LampertNH09} contains 30,475 images over 50 classes, which are then split into 40 classes for training and 10 classes for test where each category is annotated with 85 semantic attributes. \textit{AWA2} \cite{XianLSA19} has 37,322 images and uses the same 50 classes as \textit{AWA1}, with 85 semantic attributes. We adopt the split settings introduced in \cite{XianLSA19}.

\smalltitle{Training ResNet-101 on CUB, AWA1 and AWA2}
For the pre-training stage, we commonly retrain ResNet-101, which is pretrained for ImageNet, for base classes. For CUB, we train the model for 60 epochs with the initial learning rate 0.001 decayed by 20 and 40 epochs, and batch size is 256. For \textit{AWA1}, in particular, we should train only the base classifier for 100 epochs because \textit{AWA1} only provides feature vectors, not raw base samples. The learning rate starts from 0.1, and then is decayed at 50 and 80 epochs by 0.1 in \textit{AWA1}. Also, the batch size is set to 256 as in \textit{CUB}. The base model is trained on \textit{AWA2} for 60 epochs with the initial learning rate 0.1 that is decayed by 0.1 at 20 and 40 epochs and the batch size is 60 for \textit{AWA2}. For all three datasets, we use SGD optimizer with momentum 0.1 and a weight decay of 5e-4. In the fine-tuning stage of \textit{CUB}, the classifier is fine-tuned for novel classes during 1000 iterations for 1, 2-shot and 800 iterations for 5, 10-shot with learning rate 0.0002. In \textit{AWA1}, fine-tuning is done for 1000 iterations for 1, 2-shot and 500 iterations for 5, 10-shot with learning rate 0.0005. In \textit{AWA2}, we do the same process as in \textit{AWA1} but with the learning rate 0.0001. We do not learn class-wise parameters $\gamma$ and $\beta$ in these experiments.

\smalltitle{Measurement details} 
During the test time, we evaluate our method by taking an average over 600 randomly sampled tasks. We consider 64+5-way classification on \textit{Mini-ImageNet}, and 200+5-way classification on \textit{Tiered-ImageNet} with 1, 5, and 10 novel train samples (i.e. shots). The 10-shot task has never been tested in the literature of GFSL, and hence we report all the results for 10-shot tasks obtained on our own by executing the implementations of the compared methods except we re-implement LCwoF \cite{KuklevaKS21}. For XtarNet \cite{YoonKSM20}, we reproduce all the results by the authors' implementation, and the rest of the results are referred from \cite{KuklevaKS21}.

As shown in the introduction section, we evaluate our method using \textit{ImageNet-800}, which divides \textit{ImageNet} 1K classes into 800 base classes and novel 200 classes, following the settings of \cite{Chen00D021} about standard FSL. The image size and volume of data are the same as those of the original ImageNet dataset. Note that this large dataset has never been tested by any of the existing GFSL methods.

\section{Additional Experiments}

\subsection{Ablation Study}
Table \ref{tab:im800} presents the detailed results of another ablation study using \textit{ImageNet-800}, whose summary is shown in Figure \ref{fig:imagenet}. As mentioned in the introduction section, our normalization method almost reaches to the upper bounds of base and novel classes particularly when we are able to use a powerful pretrained model such as ResNet-50 trained from abundant data \textit{ImageNet-800}.
The upper bound of novel classes is referred by a previous work \cite{Chen00D021} of measuring only the novel classes, and the upper bound of base classes is taken by the original performance of the pretrained model.

\begin{table*}
\renewcommand{\arraystretch}{2.0}
\centering
\resizebox{\textwidth}{!}{%
\begin{tabular}{lccc|ccc|ccc|ccc}
\toprule 
\multicolumn{4}{c|}{\textbf{Methods/ Shots}} & \multicolumn{3}{c|}{\textbf{1-shot}}   & \multicolumn{3}{c|}{\textbf{5-shot}}   & \multicolumn{3}{c}{\textbf{10-shot}}                               \\ \cmidrule(r){5-13}
\textbf{\begin{tabular}[c]{@{}l@{}}Method \end{tabular}} &
\textbf{\begin{tabular}[c]{@{}l@{}}MC\end{tabular}} & 
\textbf{\begin{tabular}[c]{@{}l@{}}VB\end{tabular}} & \textbf{\begin{tabular}[c]{@{}l@{}}LO\end{tabular}} &
 \textbf{Novel} & \multicolumn{1}{c||}{\textbf{Base}} & \textbf{All} & \textbf{Novel} & \multicolumn{1}{c||}{\textbf{Base}} & \textbf{All} & \textbf{Novel} & \multicolumn{1}{c||}{\textbf{Base}} & \textbf{All} \\ \cmidrule(r){1-13}

Novel upper-bound \cite{Chen00D021} & \xmark & \xmark & \xmark &
\textbf{89.70 $\pm$ 0.19}    & \multicolumn{1}{c||}{\_}            & \_           & \textbf{96.14 $\pm$ 0.08}    & \multicolumn{1}{c||}{\_}            & \_           & \textbf{97.58 $\pm$ 0.25}             & \multicolumn{1}{c||}{\_}            & \_           \\
Base upper-bound  & \xmark & \xmark & \xmark  &
\_             & \multicolumn{1}{c||}{\textbf{76.01}}         & \_           & \_             & \multicolumn{1}{c||}{\textbf{76.01}}         & \_           & \_             & \multicolumn{1}{c||}{\textbf{76.01}}         & \_           \\
w/o normalization  & \xmark & \xmark & \xmark &
82.57 $\pm$ 0.83    & \multicolumn{1}{c||}{51.60 $\pm$ 0.1}    & 67.08 $\pm$ 0.39  & 95.86 $\pm$ 0.32    & \multicolumn{1}{c||}{45.10 $\pm$ 0.08}   & 70.48 $\pm$ 0.18  & 97.09 $\pm$ 0.24    & \multicolumn{1}{c||}{38.53 $\pm$ 0.08}   & 67.81 $\pm$ 0.15  \\ \cmidrule(r){1-13}
MC & \cmark & \xmark & \xmark &                                    22.70 $\pm$ 0.92    & \multicolumn{1}{c||}{75.13 $\pm$ 0.46}   & 48.91 $\pm$ 0.46  & 68.25 $\pm$ 0.9     & \multicolumn{1}{c||}{74.70 $\pm$ 0.07}   & 71.48 $\pm$ 0.45  & 80.35 $\pm$ 0.62    & \multicolumn{1}{c||}{74.41 $\pm$ 0.07}   & 77.38 $\pm$ 0.31  \\
MC + VB & \cmark & \cmark & \xmark  &
 49.56 $\pm$ 0.11    & \multicolumn{1}{c||}{71.97 $\pm$ 0.57}   & 60.77 $\pm$ 0.05  & 82.13 $\pm$ 0.74    & \multicolumn{1}{c||}{73.34 $\pm$ 0.07}   & 77.74 $\pm$ 0.36  & 89.05 $\pm$ 0.51    & \multicolumn{1}{c||}{73.18 $\pm$ 0.06}   & 81.11 $\pm$ 0.25  \\
MC + VB + LO & \cmark & \cmark & \cmark  &
 76.32 $\pm$ 0.96    & \multicolumn{1}{c||}{67.78 $\pm$ 0.09}   & \textbf{72.05 $\pm$ 0.47}  & 92.07 $\pm$ 0.45    & \multicolumn{1}{c||}{70.98 $\pm$ 0.07}   & \textbf{81.52 $\pm$ 0.23} & 95.17 $\pm$ 0.31    & \multicolumn{1}{c||}{70.35 $\pm$ 0.07}   & \textbf{82.76 $\pm$ 0.17} \\
\bottomrule
\end{tabular}
}
\vspace{-1.5mm}
\caption{Ablation study on \textit{ImageNet-800} to analyze the effectiveness of different components in our method. MC is the mean centering, VB is the variance balancing, and LO is the re-scaling with additional linear parameters.} \label{tab:im800}
\vspace{1.5mm}
\end{table*}

\begin{table*}[]
\renewcommand{\arraystretch}{2.4}
\centering
\resizebox{\textwidth}{!}{%
\begin{tabular}{l|ccc|ccc|ccc}
\toprule
\multirow{2}{*}{\textbf{Methods/ Shots}} & \multicolumn{3}{c|}{\textbf{1-shot}} & \multicolumn{3}{c|}{\textbf{5-shot}} & \multicolumn{3}{c}{\textbf{10-shot}} \\ \cline{2-10} 
                   & \textbf{Novel} & \multicolumn{1}{c||}{\textbf{Base}} & \textbf{All} & \textbf{Novel} & \multicolumn{1}{c||}{\textbf{Base}} & \textbf{All} & \textbf{Novel} & \multicolumn{1}{c||}{\textbf{Base}} & \textbf{All} \\ \hline
Linear classifier         & 60.34 $\pm$ 0.62    & \multicolumn{1}{c||}{47.28 $\pm$ 0.07}   & 53.81 $\pm$ 0.31  & 79.97 $\pm$ 0.44    & \multicolumn{1}{c||}{38.44 $\pm$ 0.07}   & 59.21 $\pm$ 0.22  & 84.06 $\pm$ 0.39    & \multicolumn{1}{c||}{34.59 $\pm$ 0.07}   & 59.33 $\pm$ 0.19  \\
Freezing base classifier             & 60.53 $\pm$ 0.62    & \multicolumn{1}{c||}{46.40 $\pm$ 0.06 }  & 53.46 $\pm$ 0.30   & 80.00 $\pm$ 0.44    & \multicolumn{1}{c||}{36.66 $\pm$ 0.06}  & 58.33 $\pm$ 0.23  & 84.20 $\pm$ 0.38    & \multicolumn{1}{c||}{33.13 $\pm$ 0.08}   & 58.67 $\pm$ 0.20   \\
Cosine   classifier        & \textbf{61.96 $\pm$ 0.79}    & \multicolumn{1}{c||}{37.59 $\pm$ 0.09}   & 49.78 $\pm$ 0.37  & \textbf{80.21 $\pm$ 0.56}    & \multicolumn{1}{c||}{30.37 $\pm$ 0.08}   & 55.29 $\pm$ 0.25  & \textbf{84.46 $\pm$ 0.49}    & \multicolumn{1}{c||}{28.06 $\pm$ 0.08}   & 56.26 $\pm$ 0.22  \\
L1 regularization & 39.29 $\pm$ 0.73    & \multicolumn{1}{c||}{55.87 $\pm$ 0.14}   & 47.58 $\pm$ 0.35  & 27.69 $\pm$ 0.57    & \multicolumn{1}{c||}{67.78 $\pm$ 0.09}   & 47.74 $\pm$ 0.25  & 31.63 $\pm$ 0.61    & \multicolumn{1}{c||}{65.34 $\pm$ 0.10}   & 48.49 $\pm$ 0.29  \\
L2 regularization & 43.68 $\pm$ 0.64    & \multicolumn{1}{c||}{69.72 $\pm$ 0.10}    & 56.70 $\pm$ 0.30   & 77.41 $\pm$ 0.45    & \multicolumn{1}{c||}{56.38 $\pm$ 0.10}   & 66.90 $\pm$ 0.24  & 82.71 $\pm$ 0.38    & \multicolumn{1}{c||}{53.70 $\pm$ 0.10}    & 68.21 $\pm$ 0.21  \\
VB in training   & 28.90 $\pm$ 0.77    & \multicolumn{1}{c||}{\textbf{77.12 $\pm$ 0.07}}   & 53.01 $\pm$ 0.37  & 63.55 $\pm$ 0.67    & \multicolumn{1}{c||}{\textbf{73.14 $\pm$ 0.07}}   & 68.34 $\pm$ 0.32  & 68.25 $\pm$ 0.59    & \multicolumn{1}{c||}{\textbf{74.71 $\pm$ 0.06}}   & 71.48 $\pm$ 0.28  \\
\cmidrule(r){1-10} 
MVCN (Ours)          & 51.72 $\pm$ 0.64   & \multicolumn{1}{c||}{65.81 $\pm$ 0.08}    & \textbf{58.77 $\pm$ 0.35}   & 75.20 $\pm$ 0.46      & \multicolumn{1}{c||}{67.62 $\pm$ 0.09}    & \textbf{71.22 $\pm$ 0.26}   & 78.7 $\pm$ 0.51     & \multicolumn{1}{c||}{68.06 $\pm$ 0.09}   & \textbf{73.38 $\pm$ 0.25}  \\ 
\bottomrule
\end{tabular}
}
\vspace{-1.5mm}
\caption{Performance comparison of various \naive approaches using \textit{mini-ImageNet}.} \label{tab:additional}
\vspace{1.5mm}
\end{table*}



\subsection{Comparison with Na\"ive Approaches}
In this section, we compare our method with a variety of na\"ive approaches, which are not necessarily introduced by the existing works. Specifically, we consider the following six methods: 
\begin{itemize}
    \item \textbf{Linear classifier} Fine-tuning only the default linear classifier
    \item \textbf{Freezing base classifier} Fine-tuning only the novel classifier while freezing the base classifier
    \item \textbf{Cosine classifier} Fine-tuning the cosine classifier, instead of a linear classifier, often used by the existing GFSL methods \cite{GidarisK18,QiBL18}
    \item \textbf{L1 regularization} L1 regularization for mitigating mean shifting phenomenon, instead of online mean centering
    \item \textbf{L2 regularization} L2 regularization for mitigating mean shifting phenomenon, instead of online mean centering
    \item \textbf{VB in training} Variance balancing during fine-tuning while the others are the same as our normalization method
\end{itemize}
In Table \ref{tab:additional}, we show that our MVCN method takes always the best position among all the compared approaches. Note that even though we freeze the base classifier, the resulting accuracy of base classes does not remain the same, but abruptly decreases. This is because, as mentioned earlier, mean shifting phenomenon harms the overall balance between novel and base classifiers, which is not addressed by freezing the base classifier. Given the occurrence of mean shifting phenomenon, using the cosine classifier seems to make the situation even worse as observed that the accuracy of base classes gets lower than it is when using the linear classifier. This is due to the fact that the cosine classifier is effective to equalize the weight variances between novel classes and base classes by increasing the weight norms of the novel classifier but decreasing those of the base classifier. Consequently, the novel bias problem gets more 
severe when using the cosine classifier without addressing mean shifting phenomenon. Strong L1 or L2 regularization improves the accuracy of base classes, implying its ability to mitigate mean shifting phenomenon. However, this time the model gets significantly in accurate for novel classes, which is because L1 or L2 regularization tends to reduce the weight variance (i.e., weight norm) of novel classes. Thus, these regularization approaches cannot achieve a desirably balanced result between novel and base classes. Finally, we can see that variance balancing is much better to be performed as a post-training process in offline, rather than being performed with mean centering in the process of training.


\begin{table*}[]
\scriptsize
\renewcommand{\arraystretch}{0.8}
\centering
\begin{tabular}{c|c|ccc|ccc}
\toprule 
\multirow{2}{*}{\textbf{GFSL Settings}} &
  \multirow{2}{*}{\textbf{Mean Centering / Shots}} &
  \multicolumn{3}{c|}{\textbf{1-shot}} &
  \multicolumn{3}{c}{\textbf{5-shot}} \\ \cline{3-8} 
   &
   &
  \textbf{Novel} &
  \multicolumn{1}{c||}{\textbf{Base}} &
  \textbf{All} &
  \textbf{Novel} &
  \multicolumn{1}{c||}{\textbf{Base}} &
  \textbf{All} \\ \cline{1-8} 
\multirow{2}{*}{\begin{tabular}[c]{@{}c@{}}Without base data\end{tabular}} &
  Novel only &
  \textbf{51.72} &
  \multicolumn{1}{c||}{65.81} &
  58.77 &
  \textbf{75.20} &
  \multicolumn{1}{c||}{67.62} &
  71.22 \\
                      & Both base and novel   & 44.65 & \multicolumn{1}{c||}{71.04} & 57.85 & 66.26 & \multicolumn{1}{c||}{\textbf{73.63}} & 69.95 \\ \cline{1-8}
\multirow{2}{*}{With base data} & Novel only  & 48.70 & \multicolumn{1}{c||}{69.17} & \textbf{58.94} & 73.32 & \multicolumn{1}{c||}{69.86} & \textbf{71.59} \\
                      & Both base and novel    & 45.38 & \multicolumn{1}{c||}{\textbf{72.04}} & 58.71 & 68.25 & \multicolumn{1}{c||}{73.40}  & 70.83 \\
\bottomrule
\end{tabular}
\vspace{-1.5mm}
\caption{Comparison on the accuracy with mean centering for novel and all classifiers with and without base data.}  \label{tab:setting}
\vspace{1.5mm}
\end{table*}

\begin{table*}[]
\renewcommand{\arraystretch}{1.2}
\scriptsize
\centering
\begin{tabular}{c|ccc|ccc}
\toprule
\multirow{2}{*}{\textbf{Backbone / Shots}} & \multicolumn{3}{c|}{\textbf{1-shot}} & \multicolumn{3}{c}{\textbf{5-shot}} \\ \cline{2-7} 
          & \textbf{Novel} & \multicolumn{1}{c||}{\textbf{Base}} & \textbf{All} & \textbf{Novel} & \multicolumn{1}{c||}{\textbf{Base}} & \textbf{All} \\ \hline
ResNet-12 & 58.56 $\pm$ 0.73    & \multicolumn{1}{c||}{60.31 $\pm$ 0.19}   & 59.44 $\pm$ 0.30  & 76.08 $\pm$ 0.53    & \multicolumn{1}{c||}{63.24 $\pm$ 0.15}   & 69.66 $\pm$ 0.25  \\
ResNet-18 & \textbf{62.11  $\pm$  0.70}   & \multicolumn{1}{c||}{\textbf{61.23  $\pm$  0.22}}  & \textbf{61.67  $\pm$  0.31} & \textbf{79.59  $\pm$  0.55}   & \multicolumn{1}{c||}{\textbf{66.30  $\pm$  0.20}}   & \textbf{72.34  $\pm$  0.28} \\
\bottomrule
\end{tabular}
\vspace{-1.5mm}
\caption{Comparison of different feature extractors on \textit{tiered-ImageNet}.} \label{tab:backbone}
\end{table*}

\begin{figure*}[t]
	\centering
	\includegraphics[height=2.7mm]{./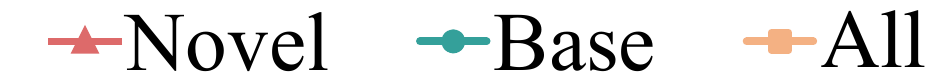} \\
    \subfigure[\label{fig:effect:1shot}1-shot]{\hspace{2mm}\includegraphics[width=0.25\columnwidth]{./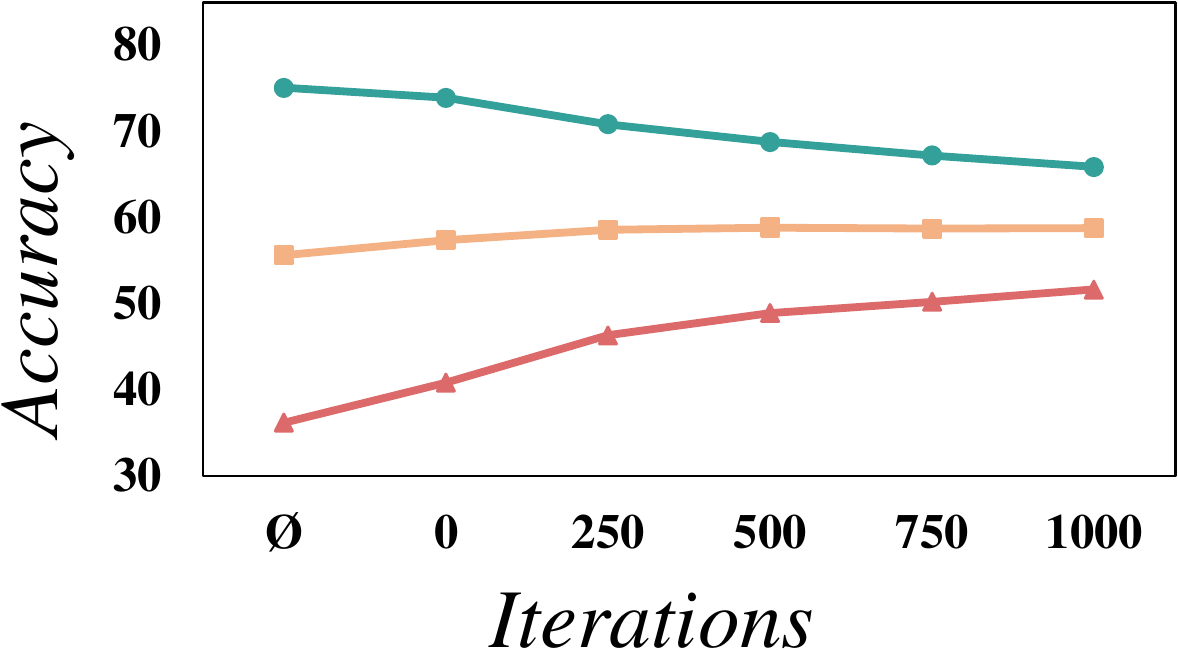}\hspace{2mm}} 
    \subfigure[\label{fig:effect:5shot}5-shot]{\hspace{2mm}\includegraphics[width=0.25\columnwidth]{./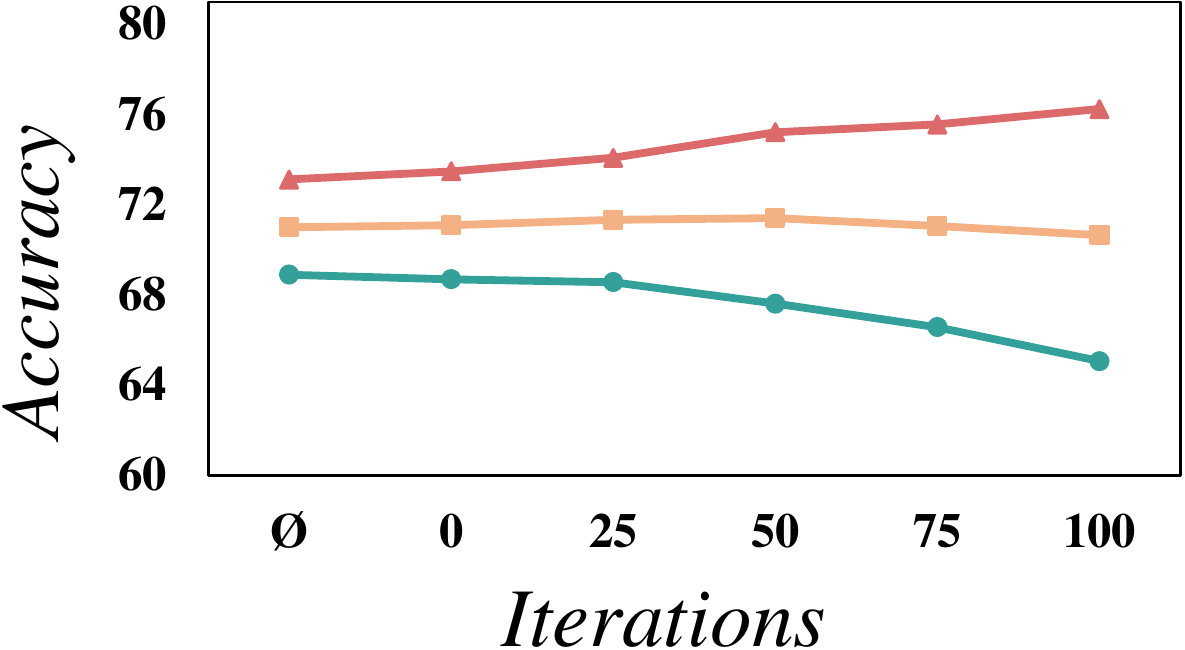}\hspace{2mm}}
    
    \caption{Evaluation on different iterations of linear parameters.}
	\label{fig:effect}
	\vspace{-2mm}
\end{figure*}

\subsection{Mean Centering for Both vs. Novel Only}
We investigate how the experimental results can be changed if we apply mean centering not only to the novel classifier but also to the base classifier. As shown in Table \ref{tab:setting}, when conducting mean centering for both base and novel classifiers, the performance of base classes increases by more than $5 \%$ as \textit{negative} mean shifting (e.g., $\bar \sigma_{base} = -0.0029$ in Figure \ref{fig:shifting}) of base classes gets addressed as well. When it comes to the overall performance, however, it is still observed that mean centering only for the novel classifier is better than applying it to both classifiers. Thus, we can control the balance between base and novel classes by adjusting to what extent mean centering is performed to either the base or novel classifier.

\subsection{Performance Comparison With and Without Base Data}
Table \ref{tab:setting} also shows how much the final accuracy gets improved if we additionally use base data just like the normal GFSL scenario. Although there is a slight improvement in accuracy over GFSL (i.e., with base data) to zero-base GFSL, the performance gap is surprisingly subtle. We believe that our normalization method successfully plays the role of base data to such an extent that additional fine-tuning with a balanced dataset, which is obtained by undersampling, cannot make much difference.

\subsection{Utilization of More Powerful Feature Extractors}
It has been reported that the better feature extractors (i.e., often deeper architectures) we use, the better the performance we can achieve in standard FSL \cite{ChenLKWH19, DhillonCRS20, TianWKTI20}. Unfortunately, this has not been the case in GFSL as the existing GFSL methods \cite{GidarisK18, ShiSSW20} do not seem to effectively leverage  of such a deeper architecture. In order to examine how well our CN method utilizes the power of feature extractors, we conduct the experiments using two different feature extractors, namely ResNet-12 and ResNet-18, both of which are trained on \textit{Tiered-ImageNet}. As shown in Table \ref{tab:backbone}, for 1-shot and 5-shot, CN with ResNet-18 leads to $2 \%$ and $2.7 \%$ improvements in accuracy, and $3.5 \%$ performance gain particularly for novel classes\footnote{Note that these results are better than those of LCwoF \cite{KuklevaKS21} in Table \ref{tab:tiered}.}. This indicates that our CN method works better when it is with a better (i.e., stronger) feature extractor, which has been as nicely observed in standard FSL.

\subsection{Effect of Linear Parameters}
Finally, we analyze the influence of $\gamma$ and $\beta$ as shown in Figure \ref{fig:effect}. The $\emptyset$ iteration means the performance before variance balancing, and 0 iteration means the performance right after performing variance balancing. When it is hard for novel classes to be effectively learned compared to base classes, such as 1-shot, these linear parameters turn out to be effective to make a better balanced accuracy between base and novel classes as they increase the accuracy of novel classes. On the other hand, if the accuracy of novel classes is already high, such as 5-shot, it suffices to train linear parameters only for few or no iterations.


\end{document}